\documentclass[letterpaper]{article} % DO NOT CHANGE THIS
\usepackage[preprint]{aaai2027}
\usepackage[hyphens]{url}  % DO NOT CHANGE THIS
\usepackage{graphicx} % DO NOT CHANGE THIS
\usepackage{natbib}  % DO NOT CHANGE THIS AND DO NOT ADD ANY OPTIONS TO IT
\usepackage{caption} % DO NOT CHANGE THIS AND DO NOT ADD ANY OPTIONS TO IT
\usepackage{algorithm}
\usepackage{algorithmic}

\usepackage{newfloat}
\usepackage{listings}
\DeclareCaptionStyle{ruled}{labelfont=normalfont,labelsep=colon,strut=off} % DO NOT CHANGE THIS
\floatstyle{ruled}
\newfloat{listing}{tb}{lst}{}
\floatname{listing}{Listing}

\usepackage{booktabs}

\usepackage{amsmath}
\usepackage{multirow}
\usepackage{amssymb}
\usepackage{mathtools}
\usepackage{tikz}
\usepackage{amsthm}

\theoremstyle{plain}
\newtheorem{theorem}{Theorem}
\newtheorem{lemma}{Lemma}
\newtheorem{proposition}{Proposition}
\newtheorem{corollary}{Corollary}
\theoremstyle{remark}
\newtheorem{remark}{Remark}

\title{Deployable Human Preference Alignment in Robotics: \\
Learning Representative Rewards from Diverse Human Preferences}

\author{
    Taehyung Kim\textsuperscript{\rm 1},
    Gwangmo Lee\textsuperscript{\rm 1},
    Minjun Chang\textsuperscript{\rm 2},
    Sunghyun Lim\textsuperscript{\rm 3},
    Jongeun Choi\textsuperscript{\rm 1}\thanks{Corresponding Author.}
}

\affiliations{
    \textsuperscript{\rm 1}Yonsei University, Seoul, South Korea\\
    \textsuperscript{\rm 2}Georgia Institute of Technology, United States of America\\
    \textsuperscript{\rm 3}Hyundai Motor Company, Seoul, South Korea\\
    \{kth8606, gwangmo2, jongeunchoi\}@yonsei.ac.kr,
    minjunchang@gatech.edu,
    shlim@hyundai.com
}

\begin{document}

\maketitle

\begin{abstract}
Aligning robot policies with human preferences is essential for deployment to diverse end users. In per-user alignment approach, preference feedback is often sparse, so learning becomes unstable and vulnerable to human preference noise, and a growing number of individualized policies makes validation difficult before deployment. A single shared policy approach to user alignment avoids this cost but fails to capture heterogeneous preferences and often neglects minority preferences. To address these challenges, we introduce \textbf{P}reference-based \textbf{RE}ward \textbf{C}lustering (\textbf{PREC}), a novel framework that learns a compact set of policies from binary preference labels provided by diverse users. From a dataset of user trajectories and their preference labels, PREC first sets the labels aside and aggregates trajectories across users to learn a population-level shared trajectory encoder, alleviating limited per-user coverage and avoiding label noise during representation learning. Using this representation, PREC jointly assigns users to preference-coherent clusters and learns a representative reward model per cluster using preference labels, from which a policy is optimized for each cluster. Clustering similar users compensates for the limited number of labels available from each user and mitigates the effect of label noise. At the same time, maintaining a manageable number of reward models reduces the validation burden at deployment. Experiments across diverse simulated locomotion environments show that PREC groups users who label different trajectory subsets into preference-coherent clusters more accurately than baseline methods. Under sparse and noisy feedback, policies trained with PREC improve all three social welfare metrics over an existing single shared-policy user-alignment approach and even outperform per-user alignment approaches.
\end{abstract}

% Uncomment the following to link to your code, datasets, an extended version or similar.
% You must keep this block between (not within) the abstract and the main body of the paper.
% Make sure that you do not de-anonymize yourself with these links.
% \begin{links}
%     \link{Code}{https://aaai.org/example/code}
%     \link{Datasets}{https://aaai.org/example/datasets}
%     \link{Extended version}{https://aaai.org/example/extended-version}
% \end{links}

\section{Introduction}
% robotic policy 이거 조심하자.

As AI systems become increasingly capable, they are being integrated into everyday user-facing applications, from conversational assistants to robots. A common strategy for improving these systems is to align their behavior with human preferences, and empirical studies have shown that such alignment enhances the performance of large language models (LLMs) \cite{ref12, ref13} as well as robotic systems \cite{ref10}. However, this approach raises concerns, including the “tyranny of the crowdworker” \cite{ref14}, where alignment directions are determined by a small group of annotators, and the loss of diversity when aggregating preferences into a single universal signal, which collapses individual preference variations \cite{ref3, ref15}. Consequently, per-user alignment in AI systems has emerged as a key objective for user-facing AI systems \cite{ref18}.

Accordingly, per-user alignment methods have been actively studied and widely deployed in domains such as chatbots and recommendation systems~\cite{ref32}. However, comparatively little attention has been paid to per-user alignment from the deployer’s perspective, particularly in robotics. For a deployer operating robotic systems at scale, per-user alignment is often constrained by (i) regulatory requirements and (ii) sparse and noisy user preference data. For example, in the United States, powered lower-extremity exoskeletons intended for medical use are classified by the FDA as Class II prescription medical devices subject to special controls~\cite{ref34}. Because these requirements include software validation, risk analysis, and clinical evaluation, fully individualized learned policies impose substantial validation burdens from the deployer's perspective, making per-user alignment difficult to scale. In addition, collecting reliable user preference data remains a key bottleneck for user-facing robotic systems, as acquiring dense preference coverage for each individual user is often costly. This scarcity destabilizes policy learning, a problem that is further amplified by the fact that human preference labels are inherently noisy~\cite{ref36}.

\begin{figure*}[t]
    \centering
    \includegraphics[width=\textwidth]{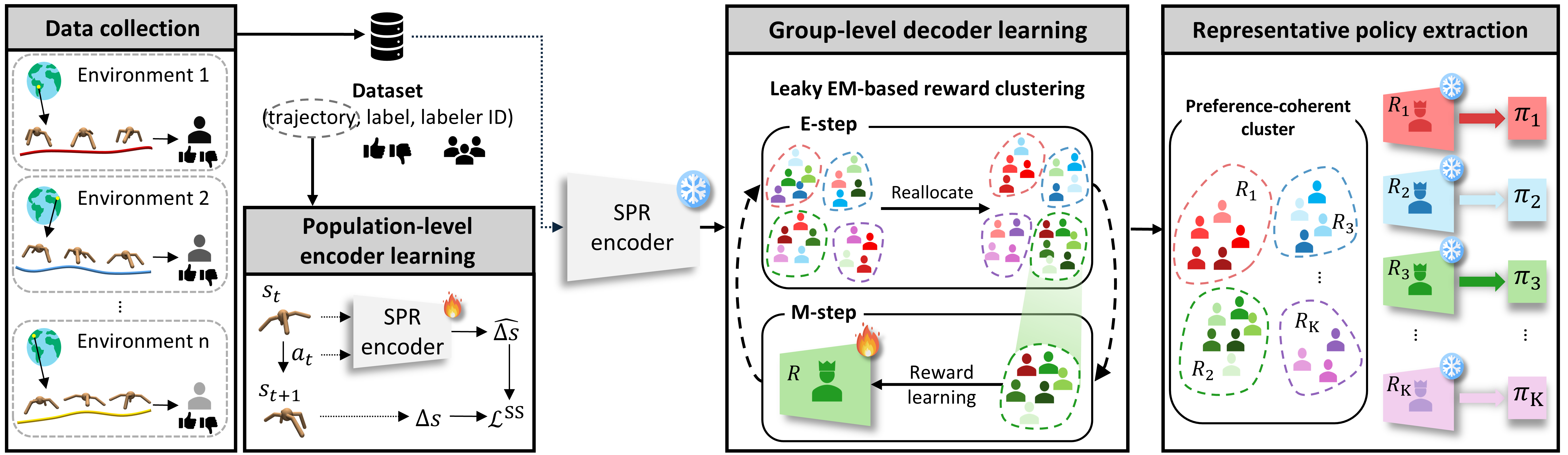}
    \caption{\textbf{Overview of PREC.} PREC first learns a population-level encoder from offline state-action data aggregated across users, reducing reliance on sparse and noisy individual preference labels. With this encoder fixed, leaky EM jointly updates cluster-specific reward decoders and groups users with similar preferences, yielding a compact set of representative reward models from good/bad feedback. Each learned reward model is then used to train one policy via IQL.}
    \label{fig_1}
\end{figure*}

% 앞에서는 per user alignment 없다고 이야기 해놓고 여기서는 왜?
These challenges have encouraged a deployment regime in which user alignment in end-user robotics is either highly limited or largely uniform across users. However, as robots are increasingly deployed, they will interact with a broad spectrum of people, including vulnerable populations~\cite{ref45}. We believe that, despite the constraints discussed above, user alignment in robotics remains necessary for serving diverse users and should be treated as a central deployment challenge. We therefore focus on user alignment for relatively low-complexity continuous-control robotic systems, such as assistive mobility devices, exoskeletons, and embodied systems that are likely to be commercialized and deployed at scale earlier than complex general-purpose robots, and validate our approach on simulated continuous-control benchmarks.

To address the aforementioned robotics deployment challenges, we propose \textbf{P}reference-based \textbf{RE}ward \textbf{C}lustering (\textbf{PREC}), a novel framework that learns a small set of representative reward models from offline preference feedback collected across many users. We build PREC upon Preference-based Reinforcement Learning (PbRL)~\cite{ref19}, in which a reward model is learned from binary feedback (i.e., good or bad), which users can provide at low cost in real-world settings, and a policy is then optimized via RL. Our key insight is to elicit preference labels from each individual over a small, distinct set of trajectories to broaden support, and then cluster users with similar preferences into a predefined number $K$ of groups to learn a set of representative reward models, each effectively capturing the preferences of its group. This reward-learning scheme enables learning policies from only a small amount of per-user preference data, while letting the deployer set $K$ to control the granularity of user alignment according to deployment conditions, such as regulatory strictness. An overview of PREC is illustrated in Figure~\ref{fig_1}.

Specifically, PREC factorizes the reward model into two components: a population-level shared trajectory encoder and a set of subpopulation-level decoders. The encoder is pretrained in a self-supervised manner with self-predictive representations (SPR) \cite{ref64} on the union of trajectories across users, allowing the shared encoder to absorb broad state-action structure while keeping human label noise out of the encoder-training stage. Building on this encoder, PREC assigns users to decoders via expectation-maximization (EM) \cite{ref63}, jointly grouping users with similar preferences and learning a representative reward model per group. Within this procedure, we propose a leaky EM, where a leak coefficient interpolates between hard and soft assignment to stabilize learning when individual clusters contain few users. We further show that the resulting updates optimize a single well-defined objective with guaranteed monotone improvement. By pooling feedback from users with similar preferences, PREC mitigates label noise and data sparsity while preserving preference diversity. 

Empirically, we show that PREC clusters individuals who labeled different trajectory subsets according to their underlying preferences more accurately than the baselines. We further demonstrate that, under sparse per-user feedback, policies from PREC outperform both single-shared-policy approaches and per-user alignment approaches across three social welfare metrics. We also confirm that these trends hold across varying intensities of injected human noise.

Our key contributions are summarized as follows:

\begin{itemize}
\item We introduce a novel method for deployable human preference alignment in robotics that addresses sparse and noisy human preference data as well as the regulatory-compliance burdens commonly encountered in robot deployment, while improving social welfare across users.

\item To exploit data from diverse users effectively, PREC improves representation learning through a shared SPR encoder, while its subpopulation-level decoders are trained with a leaky EM. We further show that leaky EM performs coordinate ascent on a restricted-family ELBO, guaranteeing monotone improvement and convergence of its values under a fixed prior.

\item Extensive experiments show that PREC forms more preference-coherent user groups than competing methods and that its learned policies outperform existing alignment methods across social-welfare metrics under limited and noisy preference data. Ablation studies further validate the contribution of each component.

\end{itemize}

% 여기 아래의 실험에 나온 논문 다 담도록 재구성 해야해. 
\section{Related Work}

\textbf{Human Alignment in Robotics.} One line of work develops human alignment methods that align a robot to an individual user, distinguished primarily by the feedback interface they assume: pairwise comparison preference feedback~\cite{ref37, ref38}, structured natural language feedback~\cite{ref40}, and free-form natural language~\cite{ref41}. Most flexibly, Promptable Behaviors~\cite{ref42} unifies several modalities, including natural-language descriptions and demonstrations, into a single multi-objective reward personalization framework. These methods have been applied in a range of downstream applications, including exoskeleton gait personalization~\cite{ref43, ref33}, assistive robotics for users with motor impairments~\cite{ref45, ref46}, and social robots for education~\cite{ref47}. Another line of work pools preference data from many users into a single shared model. \citet{ref49} learn manipulation skills from crowdsourced demonstrations, and \citet{ref50} aggregate crowdsourced action corrections to generalize a single skill across users. Most recently, \citet{ref51} extend this approach to social navigation, training a shared model of socially appropriate parking locations.

However, these robot alignment approaches do not centrally address the regulatory constraints and the sparse, noisy user preferences that can arise at the deployment stage. To bridge this gap, we propose PREC, which learns a limited set of policies from simple, user-friendly feedback, reducing the deployment burden while preserving preference heterogeneity by grouping similar users during learning. Grouping similar users also pools their feedback, mitigating the sparsity and noise of individual preference labels.

\smallskip
\noindent \textbf{Preference-based Reinforcement Learning.} Early work showed that policies can be trained from human pairwise comparisons by learning reward models online over trajectory segments~\cite{ref19}. The online PbRL paradigm was later systematized by PEBBLE~\cite{ref20}, which improved data efficiency through unsupervised pretraining and off-policy reward relabeling. Subsequent methods further improved label efficiency, robustness, and exploration using pseudo-labeling \cite{ref21}, noise filtering \cite{ref22}, and uncertainty-driven intrinsic rewards~\cite{ref23}. However, online PbRL requires users to remain in the training loop and may expose users or hardware to unsafe behaviors during early exploration, limiting its applicability to real-world robotics.

%Learning from preferences in this simple form is relatively easy for labelers to provide and reduces the need for precise verbal descriptions or physically skilled demonstrations, making preference collection more accessible. It can also capture tacit aspects of embodied interaction, such as comfort and naturalness, that are difficult for users to articulate.

%

Another line of PbRL research focuses on offline PbRL~\cite{ref61} to reduce interaction cost. Reward model-based methods have improved credit assignment through non-Markovian transformer rewards \cite{ref24}. However, these methods still rely on sufficient human preference data and remain sensitive to sparse or noisy preference labels. More importantly, most PbRL studies consider preferences drawn from a single labeler distribution, and even works that incorporate feedback from multiple users~\cite{ ref29, ref30} typically aggregate these preferences to learn a single policy, rather than explicitly modeling heterogeneous user populations.

% , while a parallel line of reward-free methods replaces the explicit reward network with implicit Bellman-consistent value functions \cite{ref26}, regret-based contrastive policy losses \cite{ref27}, or preference-conditioned trajectory generators \cite{ref62}

Our work addresses this deployment gap by departing from the dominant reward-learning paradigm, in which state representations and reward models are learned end-to-end. Instead, we decouple representation learning from noisy preference supervision using a SPR encoder and learn group-level representative reward decoders, rather than per-user or population-level reward models that are particularly sensitive to limited data and label noise.

\section{Method}
\label{sec:method}
In this section, we present our framework for deployable user-alignment in a multi-user setting. We first formulate the limited policy deployment problem, then describe population-level representation learning, subpopulation-level decoder learning, and policy optimization.

\subsection{Problem Formulation}
\label{sec:problem}

We consider a shared Markov environment
$\mathcal{E}=(\mathcal{S},\mathcal{A},\mathcal{T},\mathcal{T}_0,\gamma)$,
where $\mathcal{S}$ is the state space, $\mathcal{A}$ is the action space,
$\mathcal{T}$ is the transition kernel, $\mathcal{T}_0$ is the initial-state
distribution, and $\gamma\in[0,1)$ is the discount factor. We consider a population of $N$ users that share the same environment but may differ in their behavioral preferences. 

Unlike standard reward-supervised reinforcement learning, we do not assume
access to scalar reward values for individual state-action pairs. Instead,
we consider a preference-based setting in which supervision is provided only
through labeled trajectory segments. Specifically, for each user $i\in[N]$,
we are given a user-specific dataset
\begin{equation}
\mathcal{D}_i=\{(\tau_{i,m},y_{i,m})\}_{m=1}^{M_i},
\end{equation}
where $\tau_{i,m}=\{(s_1,a_1),\ldots,(s_T,a_T)\}$ denotes a trajectory segment and
$y_{i,m}\in\{0,1\}$ is the corresponding human-provided preference label, with 0 and 1 indicating bad and good human labels, respectively. The full dataset across users is denoted by
$\mathcal{D}=(\mathcal{D}_i)_{i=1}^{N}$. 

Our objective is to serve the entire population using only a limited number of deployable policies. Let
$\boldsymbol{\pi}=(\pi_1,\ldots,\pi_K)\in\Pi^K$ denote a set of $K\le N$ policies, and let $\boldsymbol{\alpha}\in\{0,1\}^{N\times K}$ denote the user-to-policy
assignment matrix, where $\sum_{k=1}^{K}\alpha_{ik}=1$ for every user
$i\in[N]$. We seek an assignment-policy pair $(\boldsymbol{\alpha},\boldsymbol{\pi})$ that maximizes a social welfare function $\mathrm{SW}(\boldsymbol{\alpha},\boldsymbol{\pi})$.

%We seek an assignment-policy pair
%$(\boldsymbol{\alpha},\boldsymbol{\pi})$ that maximizes the social welfare
%\begin{equation}
%\mathrm{SW}(\boldsymbol{\alpha},\boldsymbol{\pi})
%=
%\frac{1}{N}\sum_{i=1}^{N}\sum_{k=1}^{K}\alpha_{ik}J_i(\pi_k),
%\end{equation}
%where $J_i(\pi_k)$ denotes the return of policy $\pi_k$ for user $i$.

\subsection{Population-Level Representation Learning}
\label{3_2}
We cast reward learning as the problem of converting segment-level human
preference supervision into a scalar reward function over state-action pairs.
Given a labeled example $(\tau_{i,m}, y_{i,m})$, we seek a reward model whose aggregated
prediction over $\tau_{i,m}$ explains $y_{i,m}$. We therefore parameterize
the reward model as
\begin{equation}
\hat r_{\psi}(s,a)
=
h_{\psi^{\mathrm{dec}}}\!\left(f_{\psi^{\mathrm{enc}}}(s,a)\right),
\end{equation}
where the encoder $f_{\psi^{\mathrm{enc}}}$ embeds each collected
state-action pair into a latent feature space and the decoder
$h_{\psi^{\mathrm{dec}}}$ maps the latent feature to a scalar reward.
The segment score is
\begin{equation}
\hat R_{\psi}(\tau_{i,m})
=
\frac{1}{T}\sum_{t=1}^{T}\hat r_{\psi}(s_t,a_t),
\end{equation}
which is subsequently matched to the observed label $y_{i,m}$.

An individualized user alignment strategy would be to learn a separate reward model for every user. In our setting, however, each user provides only a sparse and noisy set of labeled segments, making such user-specific reward estimation highly unstable. To address this, rather than training user-specific reward models end-to-end as in prior approaches, we adopt a decoupled procedure: we first learn a shared population-level encoder from offline trajectories collected across all users, and then, with the learned encoder frozen, fit cluster-specific decoders using human preference labels. This decoupled structure exploits a much broader set of state-action observations than those available from any single user, while avoiding direct dependence on sparse and noisy labels during representation learning.

However, learning such an encoder from unlabeled trajectories alone is challenging, since there is no explicit supervision specifying which features should be preserved. Inspired by \citet{ref64}, we pretrain the shared SPR encoder to predict the one-step state difference $s_{t+1}-s_t$ from the current state-action pair.

\subsection{Subpopulation-level Decoder Learning}
\label{3_3}
Given the pretrained population-level encoder, we learn a compact set of subpopulation-level decoders. This design keeps the set of deployable policies manageable, since each learned reward decoder induces one downstream policy, allows the deployer to explicitly choose the degree of human alignment, and mitigates the effect of sparse and noisy preference labels through a shared label pool.

Formally, we instantiate one decoder per cluster and define the corresponding cluster-specific reward model as
\begin{equation}
\hat r_{\psi_k}(s,a)
=
h_{\psi_k^{\mathrm{dec}}}\!\left(f_{\psi^{\mathrm{enc}*}}(s,a)\right),
\end{equation}
where $f_{\psi^{\mathrm{enc}*}}$ is the pretrained frozen encoder and $h_{\psi_k^{\mathrm{dec}}}$ is the decoder for cluster $k\in[K]$. For a labeled segment $(\tau_{i,m}, y_{i,m})$, the predicted segment score under cluster $k$ is
\begin{equation}
\hat R_{\psi_k}(\tau_{i,m})
=
\frac{1}{T}\sum_{t=1}^{T}\hat r_{\psi_k}(s_t,a_t).
\end{equation}
We treat the user-to-cluster assignment as a latent variable
$Z_i\in[K]$ and optimize the decoders under an EM framework. Under cluster $k$, we model the likelihood of the observed label $y_{i,m}$ as
\begin{equation}
\begin{aligned}
& P(y_{i,m}\mid \tau_{i,m}, Z_i=k,\psi_k) \\
& = \sigma\!\bigl(\hat R_{\psi_k}(\tau_{i,m})\bigr)^{y_{i,m}}
\Bigl(1-\sigma\!\bigl(\hat R_{\psi_k}(\tau_{i,m})\bigr)\Bigr)^{1-y_{i,m}}
\end{aligned}
\end{equation}
and define the user-level likelihood by aggregating over all labeled segments of user $i$,
\begin{equation}
\mathcal{L}_{ik}
=
\prod_{m=1}^{M_i}
P(y_{i,m}\mid \tau_{i,m}, Z_i=k,\psi_k).
\end{equation}
Intuitively, $\mathcal{L}_{ik}$ measures how well the decoder of cluster
$k$ explains the preference labels of user $i$.

\paragraph{E-step.}
Given the current decoders, the E-step computes the posterior
responsibility of each cluster for each user:
\begin{equation}
\label{eq:gamma} 
\gamma_{ik}
=
P(Z_i=k\mid \mathcal{D}_i)
=
\frac{\rho_k\,\mathcal{L}_{ik}}
{\sum_{k'=1}^{K}\rho_{k'}\,\mathcal{L}_{ik'}},
\end{equation}
where $\rho_k$ denotes the cluster prior, which is held fixed throughout
leaky EM. Thus, $\gamma_{ik}$ can be interpreted as the degree to which user $i$ is
explained by cluster $k$. We then convert these responsibilities into
a hard assignment
\begin{equation}
\label{eq:zhat} 
\hat z_i=\arg\max_k \gamma_{ik},
\end{equation}
which is then held fixed during the subsequent M-step, where the cluster-specific decoders are updated.

\paragraph{Leaky M-step.}
Given the hard assignment $\hat z_i$ from the E-step, a standard hard
M-step would train decoder $k$ using only the users with $\hat z_i=k$,
i.e.,
\begin{equation}
\psi_k^{\mathrm{dec}}
\leftarrow
\arg\min_{\psi_k^{\mathrm{dec}}}
\sum_{i:\hat z_i=k}\sum_{m=1}^{M_i}
\mathrm{BCE}\!\left(
\sigma\!\bigl(\hat R_{\psi_k}(\tau_{i,m})\bigr),\, y_{i,m}
\right).
\end{equation}
However, this update can be brittle when a cluster temporarily contains only a few users, since the corresponding decoder is then trained on a very small and potentially biased label set. Such clusters are prone to overfitting, which in turn can destabilize the next E-step.

To mitigate this issue, we adopt a \emph{leaky} M-step. Instead of using
a strictly binary assignment weight, we assign each user a small positive weight even for non-selected clusters:
\begin{equation}
\label{eq:omega} 
\omega_{ik}
=
\max\!\bigl(\mathbf{1}[\hat z_i=k],\,\nu\bigr),
\end{equation}
where $\nu\in[0,1)$ is a leak coefficient. The decoder for cluster $k$ is
then updated by the weighted objective
\begin{equation}
\label{eq:leaky-mstep} 
\psi_k^{\mathrm{dec}}
\leftarrow
\arg\min_{\psi_k^{\mathrm{dec}}}
\sum_{i=1}^{N}\omega_{ik}\sum_{m=1}^{M_i}
\mathrm{BCE}\!\left(
\sigma\!\bigl(\hat R_{\psi_k}(\tau_{i,m})\bigr),\, y_{i,m}
\right).
\end{equation}
The leaky M-step balances two competing objectives: preserving cluster specialization and stabilizing decoder learning. While a hard update can overfit when a cluster has few assigned users, a fully soft update can blur cluster boundaries. By allowing a small amount of cross-cluster supervision, the leaky M-step regularizes small clusters without sacrificing distinct reward structure. As in standard EM, the proposed leaky EM admits a coordinate-ascent interpretation with respect to a single objective. We formulate this property as a theorem:

\smallskip
\noindent\textbf{Theorem 1.}
\textit{Under fixed cluster priors, leaky EM monotonically improves the restricted-family leaky ELBO, and the resulting ELBO values converge to a finite limit.}
\smallskip

\noindent The proof is provided in Appendix~\ref{app:proof}.

\subsection{Cluster-wise Policy Optimization}
\label{3_4}

After the leaky EM updates reach the prescribed stopping criterion, we obtain a set of learned cluster-specific reward models
$\{\hat r_{\psi_k^*}\}_{k=1}^{K}$ together with the corresponding user
partition. We then train one policy for each cluster using the
learned reward models. Specifically, for each $k\in[K]$, we solve
\begin{equation}
\pi_k^*
\in
\arg\max_{\pi\in\Pi}
\mathbb{E}_{\tau\sim\pi}\bigl[\hat R_{\psi_k^*}(\tau)\bigr].
\end{equation}
In implementation, we relabel the shared offline transition pool for each
cluster using its learned reward head. For each transition $(s,a)$, the reward
is computed as the average logit of the three ensemble models. We then use
implicit Q-learning (IQL)~\cite{ref68} to train one policy per cluster,
noting that the framework is compatible with any downstream offline RL algorithm.

%%%%%%%%%%%%%%%%%%%%%%%%%%%%%%%%%%%%%%%%%%%%%%%%%%%%%%%%%%%%%%%%%%%%%%%%%%%%%%

\section{Experiments}
We design our experiments to address three questions: (i) whether PREC can effectively assign users with similar preferences while simultaneously learning rewards; (ii) whether it improves social welfare in sparse and noisy environments relative to per-user alignment and single global policy approaches; and (iii) how SPR and leaky EM affect the performance of PREC.

% 여기에 50 step 짜리 쓴다는 것과 구현세부는 appendix 참조하라고 넣어야 해. 
\subsection{Experimental Environment}
We implement PREC with the standard PbRL reward model architecture, an MLP-based Markovian reward model \cite{ref19}, while noting that PREC is not tied to this particular design and can also be combined with other reward model architectures such as Preference Transformer \cite{ref24}. Refer to Appendix~\ref{prec_arch} for detailed implementation and hyperparameters. We evaluate our method on four D4RL~\cite{ref69} MuJoCo~\cite{ref70} locomotion environments—HalfCheetah, Ant, Hopper, and Walker2d. For each environment, we construct a trajectory pool by mixing the random, medium, and expert datasets provided in D4RL at a 1:2:2 ratio, and sample trajectory segments of 50 transitions from this pool to serve as user queries. These diverse trajectory pools make them suitable for testing whether PREC can preserve heterogeneous user preferences under sparse and noisy feedback.

% scripted labeller 뒤에 cite 엄청 달고, noise injector도 확실히 어필하자. 
\subsection{User Modeling} 
To validate PREC, we follow the evaluation protocols of existing PbRL studies \cite{ref21, ref22, ref24, ref75, ref66} and introduce a scripted labeler that generates the preferences of multiple users across diverse preference distributions, thereby examining whether PREC is broadly applicable across a wide range of settings. The procedure for implementing the scripted labeler, and for modeling realistic human labeling mistakes \cite{ref36} to enhance its realism, is as follows.

\smallskip
\noindent\textbf{Scripted labeler.}
For each environment $e$, we define a three-dimensional trajectory feature vector $\phi_e(\tau)\in[0,1]^3$ for a trajectory $\tau$, where each component corresponds to an interpretable nonlinear behavioral descriptor. For Ant and HalfCheetah, these descriptors capture postural stability, directional consistency, and target-speed tracking. For Hopper and Walker2d, they capture energy efficiency, postural stability, and gait regularity. Each scripted user $i$ combines these nonlinear descriptors through a preference weight vector $w_i\in\Delta^2$, where 
\begin{equation}
\Delta^2=\{w\in\mathbb{R}_{\geq0}^3:\textstyle\sum_{c=1}^{3}w_c=1\}. 
\end{equation}
The user-specific score for $\tau$ is defined as
\begin{equation}
s_i(\tau)=w_i^\top \phi_e(\tau).
\end{equation}
The feedback label is then generated according to the user's own preference criterion:
\begin{equation}
y_i(\tau)=\mathbb{I}\left[s_i(\tau)>\eta\right],
\end{equation}
where $\eta$ is set to 0.5 in our experiments.

\smallskip
\noindent\textbf{Noise injection.}
To make the oracle scripted labeler more closely resemble real humans, we incorporate label noise following the pairwise preference noise injection scheme proposed in B-Pref \cite{ref36}, adapted to the good/bad binary feedback setting. Such an approach is frequently used in prior work to strengthen the credibility of studies that rely on scripted labelers \cite{ref22, ref71}. Specifically, when label noise is enabled, we apply noise to all users. For each user and each segment label, we sample one of the five B-Pref noise types—stochastic, myopic, mistake, skip, or equal—and apply the sampled noise mechanism to that label. We inject noise at two intensity levels, which alter 16\% and 28\% of the original labels, respectively. Detailed adaptation procedures are provided in Appendix~\ref{noise_injection}.

%noise_injection

\subsection{Preference-coherent User Clustering} 
We evaluate whether PREC recovers preference-coherent user groups in the simplex preference space. Specifically, we cluster a total of 30 users across four environments, using 50 feedback samples per user.

\smallskip
\noindent\textbf{Baselines.}
We compare PREC against the following baselines. (i) \textbf{Random} assigns members to equally sized groups. (ii) \textbf{K-Means} clusters users based on the similarity of their sets of preferred and dispreferred trajectories. (iii) \textbf{MLP} trains a separate MLP for each user to predict preference labels, computes a score profile over all trajectories, and groups users according to the Pearson correlation between their score profiles. To construct stronger baselines, we further consider two methods that explicitly exploit the multi-user feature-axis representation $\phi_e(\tau)$. 
(iv) \textbf{K-Means (Axis)} constructs, for each annotator $i$, a preference direction
\begin{equation}
d_i=\operatorname*{mean}_{y=1}\phi_e(\tau)-\operatorname*{mean}_{y=0}\phi_e(\tau)\in\mathbb{R}^{C},
\end{equation}
defined as the difference between the mean feature vectors of preferred and dispreferred segments, and clusters $\{d_i\}$ using K-means. 
(v) \textbf{W-K-Means (Axis)} fits, for each annotator, an $\ell_2$-regularized logistic regression model with weight vector $w_i\in\mathbb{R}^{C}$ to predict preference labels from segment features, thereby recovering the annotator's behavioral weight vector, and clusters $\{w_i\}$ using K-means.

\smallskip
\noindent\textbf{Metrics.} Clustering quality is measured using the silhouette (SH) score \cite{ref72} and the Calinski--Harabasz (CH) index \cite{ref73}; SH score measures how well each user is separated from neighboring clusters relative to its assigned cluster, while the CH index quantifies the ratio of between-cluster dispersion to within-cluster dispersion. SH and CH are computed in the ground-truth preference-weight space using the cluster assignments produced by each method. Higher values for both metrics indicate more compact and better-separated clusters.

Table~\ref{table_1} reports the mean of the environment-level SH and CH scores across the four environments. Bold values denote the best mean and methods with no significant pairwise difference from the best under two-sided paired Wilcoxon signed-rank tests. PREC maintains stronger clustering quality as the number of clusters increases, consistently surpassing the baselines. Notably, even without access to any information about the ground-truth preference features, PREC also outperforms baselines such as K-Means (Axis) and W-K-Means (Axis), which directly utilize the ground-truth preference features $\phi_e(\tau)$. We attribute this result to a fundamental difference in the learning paradigm. Baselines first learn user-specific representations or reward models independently and then perform clustering based on those representations. Consequently, their clustering quality is highly dependent on the accuracy of individual-level estimation, which becomes unreliable when preference data are limited.

% indicating that jointly optimizing user assignment and cluster-specific reward learning is more effective at preserving fine-grained preference structure under sparse feedback

In contrast, PREC pools feedback from users within each cluster and iteratively refines both user assignments and cluster-level reward models. This allows the model to leverage shared information across similar users while training, yielding more stable clustering under sparse-feedback conditions. Moreover, because reward models are learned jointly with the clustering process, no additional training is required after the clusters are formed. To the best of our knowledge, this joint framework for simultaneously clustering users and learning cluster-specific reward models is novel, and provides an effective solution when data are scarce and individual-level learning is inherently unstable. The extended version of Table~\ref{table_1}, covering all four environments, is provided in Appendix~\ref{expand_table}, while the preference distributions and clusters are visualized in Appendix~\ref{clustering_vis}.

%{clustering_vis} Operating_Regime

% appendix ref 걸기

%  Bold values denote the best mean and methods with no significant pairwise difference from the best under two-sided paired Wilcoxon signed-rank tests.
\begin{table}[t]
\centering
\caption{User clustering quality under sparse feedback.}
\label{table_1}
\setlength{\tabcolsep}{5pt}
\renewcommand{\arraystretch}{0.9}
\small
\begin{tabular}{l cc cc cc}
\toprule
\multirow{2}{*}[-0.6ex]{Method}  & \multicolumn{2}{c}{$K{=}2$} & \multicolumn{2}{c}{$K{=}3$} & \multicolumn{2}{c}{$K{=}5$} \\
\cmidrule(lr){2-3}\cmidrule(lr){4-5}\cmidrule(lr){6-7}
 & SH & CH & SH & CH & SH & CH \\
\midrule
Random & 0.03 & 1.5 & -0.08 & 1.4 & -0.22 & 0.9 \\
K-Means & 0.56 & 289 & 0.47 & 213 & 0.22 & 144 \\
MLP & 0.38 & 159 & 0.32 & 121 & 0.12 & 81 \\
K-Means (Axis) & \textbf{0.62} & \textbf{403} & 0.53 & 299 & 0.23 & 175 \\
W-K-Means (Axis) & \textbf{0.62} & 379 & 0.53 & 304 & 0.27 & 187 \\
\textbf{PREC} & \textbf{0.64} & \textbf{406} & \textbf{0.61} & \textbf{411} & \textbf{0.54} & \textbf{383} \\
\bottomrule
\end{tabular}
\end{table}

\subsection{Social Welfare under Sparse and Noisy Feedback}
\label{sec4_4}
We evaluate the social welfare achieved by PREC relative to existing paradigms in settings where the data are sparse and noisy. Specifically, we sample reward weights for 30 users, with each user labeling 50 independently sampled trajectory segments for Hopper and Walker2d, and 100 for HalfCheetah and Ant. The sampled preference distributions are provided in Appendix~\ref{prefdistribure}.

\begin{figure*}[t]
    \centering
    \includegraphics[width=\textwidth]{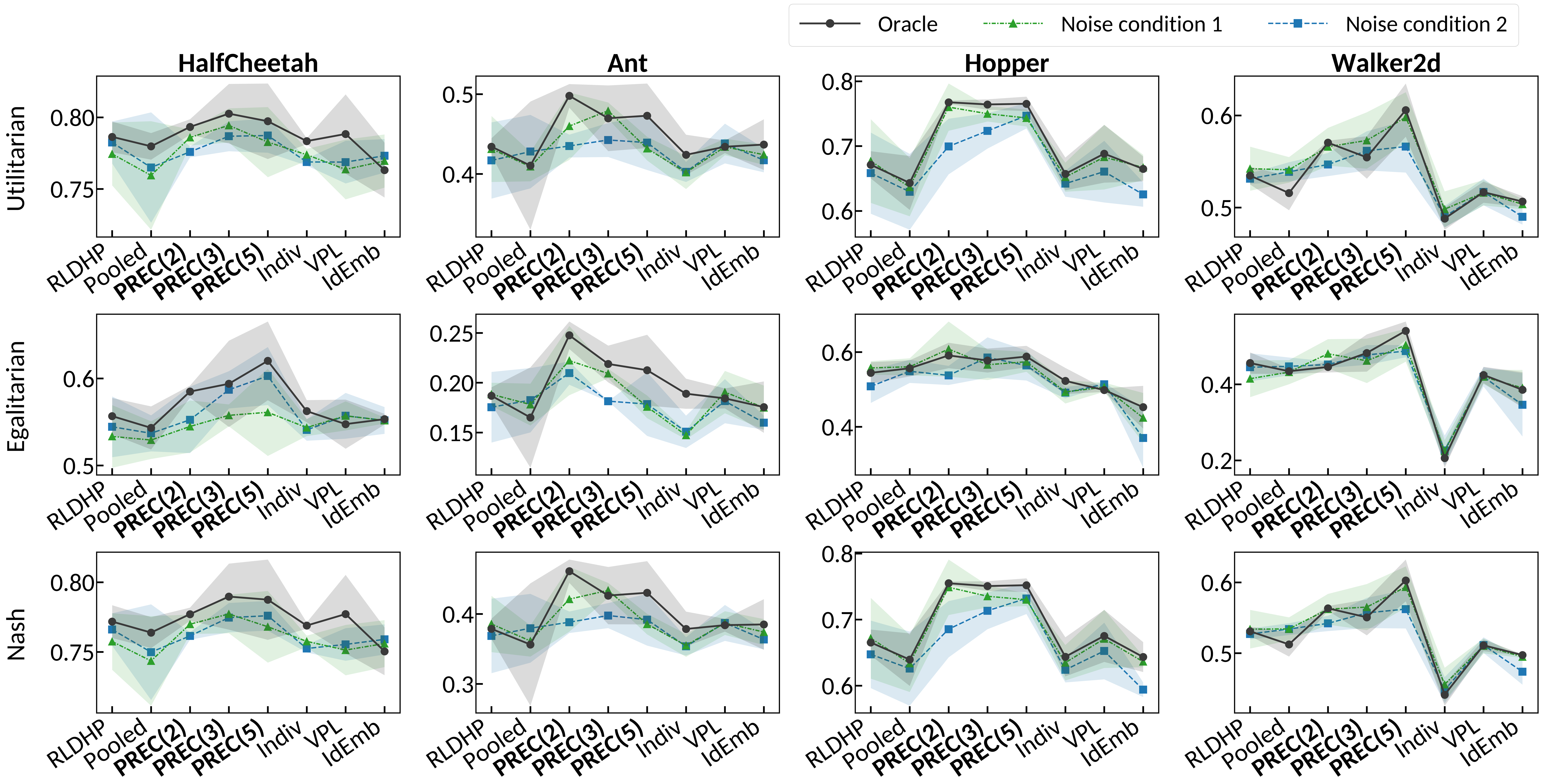}
    \caption{\textbf{Social welfare across methods under different noise levels.} In the oracle setting, PREC(2), PREC(3), and PREC(5) achieve near-best performance across most environments and metrics, and this trend largely persists under different noise levels. Markers show the mean over three preference-distribution seeds, and shaded regions indicate one standard deviation.}
    \label{fig2}
\end{figure*}

% Social welfare across methods under different noise levels. In the oracle setting, PREC(2), PREC(3), and PREC(5) achieve near-best performance across most environments and metrics, and this trend largely persists under noise.

%Bold values indicate the best mean performance for each metric. seed 5, env same as fig 1
\begin{table*}[t]
\centering
\caption{Ablation of PREC components on social welfare across MuJoCo environments.}
\label{tab:ablation}
\renewcommand{\arraystretch}{1.25}
\resizebox{\textwidth}{!}{%
\begin{tabular}{l|ccc|ccc|ccc|ccc}
\noalign{\hrule height 1.0pt}
\multirow{2}{*}{Method}
& \multicolumn{3}{c|}{HalfCheetah}
& \multicolumn{3}{c|}{Ant}
& \multicolumn{3}{c|}{Hopper}
& \multicolumn{3}{c}{Walker2d} \\
\cline{2-13}
& Util & Egal & Nash
& Util & Egal & Nash
& Util & Egal & Nash
& Util & Egal & Nash \\
\noalign{\hrule height 1.0pt}

PREC (w/o SPR, leaky EM)
& 0.786 & 0.555 & 0.773
& 0.413 & 0.132 & 0.421
& 0.680 & 0.564 & 0.672
& 0.482 & 0.188 & 0.447 \\

PREC (w/o SPR)
& 0.784 & 0.545 & 0.770
& 0.417 & 0.141 & 0.369
& 0.643 & 0.510 & 0.632
& 0.512 & 0.425 & 0.506 \\

PREC (w/o leaky EM)
& \textbf{0.790} & 0.565 & \textbf{0.776}
& 0.436 & 0.146 & 0.390
& 0.747 & 0.553 & 0.731
& 0.551 & 0.364 & 0.531 \\

\textbf{PREC}
& \textbf{0.798} & \textbf{0.574} & \textbf{0.785}
& \textbf{0.468} & \textbf{0.165} & \textbf{0.427}
& \textbf{0.766} & \textbf{0.585} & \textbf{0.753}
& \textbf{0.577} & \textbf{0.481} & \textbf{0.572} \\

\noalign{\hrule height 1.0pt}
\end{tabular}%
}
\end{table*}

\smallskip
\noindent\textbf{Baselines.} We compare PREC against a range of algorithms that leverage existing preference data from multiple users. (i) \textbf{RLDHP}~\cite{ref29} stabilizes reward learning by training a reward model on diverse and inconsistent human preferences with strong latent-space regularization. (ii) \textbf{IdEmb}~\cite{ref74} learns an identity-conditioned prediction model by embedding each user ID as a latent preference representation. (iii) \textbf{VPL}~\cite{ref39} models diverse users as samples from a shared latent preference space and conditions reward prediction on the inferred user-specific latent. Because multi-user alignment has been less extensively studied in control domains, we reimplemented \textbf{IdEmb} and \textbf{VPL}, which were originally designed for language-modeling settings, by adapting their core modeling assumptions to our robotic setting. Implementation details are provided in Appendix~\ref{baseline_implementation}. (iv) \textbf{PREC} learns 2, 3, and 5 representative reward models, denoted PREC(2), PREC(3), and PREC(5), respectively. For comparison, we additionally consider two limiting cases. (v) \textbf{Pooled} corresponds to a single population-level policy trained on the pooled preference dataset without any identification, while (vi) \textbf{Indiv} corresponds to a fully individualized PbRL setting in which no preference data are shared across users.

\smallskip
\noindent\textbf{Metrics.}
After training, each user is assigned to the policy representing their cluster. Let $g(i)$ denote the index of the policy serving user $i$ and $\pi_{g(i)}$ that policy. We define user $i$'s realized utility as their own preference score on the trajectories that policy produces,
\begin{equation}
u_i=\mathbb{E}_{\tau\sim\pi_{g(i)}}\!\big[s_i(\tau)\big]
   =w_i^{\top}\,\mathbb{E}_{\tau\sim\pi_{g(i)}}\!\big[\phi_e(\tau)\big].
\end{equation}

\noindent We then summarize the resulting utility profile using three standard social-welfare metrics. Utilitarian welfare is the mean utility across users, capturing aggregate satisfaction. Egalitarian welfare is computed as the mean utility among the worst-off 30\% of users, reflecting the least-satisfied users. Nash welfare is the geometric mean of utilities, which balances efficiency and equity by penalizing unequal utility allocations. Utilities are estimated from $20$ rollout episodes per cluster using evaluation segments of length $50$. Higher values indicate better welfare for all three metrics.  

% oracle이 뭔데? 설명이 없어.
Figure~\ref{fig2} shows the social welfare of the various methods. As shown in Figure~\ref{fig2}, PREC(2), PREC(3), and PREC(5) tend to achieve the highest social welfare across all metrics in the oracle setting. In particular, PREC(3) and PREC(5) consistently attain the highest values across all environments and all three welfare metrics relative to the baselines. The strong utilitarian welfare indicates that PREC excels at average user alignment, the high egalitarian welfare shows that it improves the welfare of lower-tail users who might otherwise be neglected, and the high Nash welfare suggests that PREC strikes an effective trade-off between efficiency and fairness. These trends persist under the two stages of injected noise.

% (see Appendix~\ref{Operating_Regime})
Indiv, VPL, and IdEmb all localize each user individually: Indiv fits a separate reward model per user, IdEmb learns a per-user ID embedding, and VPL infers a per-user continuous latent. This complexity can be advantageous when data are abundant, but under sparse feedback each per-user model, embedding, or latent is fit from too few labels and becomes unstable: the IdEmb embedding is undertrained, and the VPL latent variable is underidentified. In contrast, PREC groups users with similar preferences through repeated reassignment during reward learning via leaky EM, yielding a coarser group-level identification rather than a per-user identifier. This approach can attain higher social welfare than per-user identification under sparse feedback, and the trend holds even when the feedback is noisy. Pooled and RLDHP, on the other hand, aggregate all users' preference data into a single reward model. Pooling yields broad support, but conflicting preferences across users make the reward model harder to learn. RLDHP mitigates this by constraining the latent embedding against a reference distribution, and tends to achieve higher social welfare than Pooled. However, it still attains lower social welfare than PREC models, which capture preferences by partitioning users into smaller, preference-homogeneous groups.

Overall, we find that PREC delivers higher social welfare than the baselines when sufficient per-user preference data are hard to collect and when the collected data are noisy. The resulting compact summary of effective policies also makes it easier for the deployer to facilitates regulatory oversight and the management of deployed policies. We further characterize when the advantage of PREC is greatest by progressively increasing the per-user feedback budget within this sparse regime, as detailed in Appendix~\ref{Operating_Regime}.

\smallskip
\noindent\textbf{Ablation Study.}
We ablate the SPR encoder and leaky EM, evaluating three variants: an MLP encoder instead of the shared SPR encoder (w/o SPR), a hard-EM variant with zero leak coefficient (w/o leaky EM), and PREC with both disabled (w/o SPR, leaky EM). Welfare is averaged across PREC(2), PREC(3), and PREC(5), with the setup identical to Section~\ref{sec4_4}. As shown in Table~\ref{tab:ablation}, where bold values denote the best mean and methods statistically tied with the best under two-sided paired Wilcoxon signed-rank tests, full PREC is best on all environment--metric combinations, and removing both components yields the weakest variant.

\section{Conclusion}
We introduced PREC, a framework for deployable human alignment that learns a compact set of representative reward models, avoiding both the rigidity of a single global policy and the cost of fully individualized ones. By pairing a population-level encoder with leaky EM-based reward clustering, PREC preserves preference diversity and stays robust under sparse, noisy feedback. For future work, we plan to extend PREC to real human preferences and physical robots.

\bibliography{aaai2027}

\appendix
\section*{Appendix}
\setcounter{secnumdepth}{2}

\renewcommand{\thesection}{\Alph{section}}
\renewcommand{\thesubsection}{\thesection.\arabic{subsection}}

\section{Proof of Monotone Improvement}
\label{app:proof}

% reward label을 계속 더 잘맞추도록 하면서, 비슷하게 clustering 한다. 

This appendix establishes that, under the fixed-prior decoder-update formulation, the E-step and leaky M-step introduced in the main text jointly improve a single objective, and that the resulting sequence of
leaky-ELBO values converges. The argument casts the leaky weight $\omega_{ik}$ as the unnormalized density of a restricted variational distribution over cluster assignments, and identifies the corresponding ELBO as the objective being improved.
 
% ---------------------------------------------------------------------
\subsection{Setup}
\label{app:setup}
 
Let $\Theta=(\rho_{1:K},\psi_{1{:}K}^{\mathrm{dec}})$ collect the model parameters of the leaky mixture formulation, with $\rho_k>0$ for all $k\in[K]$ and $\sum_{k=1}^{K}\rho_k=1$. We assume finite decoder logits, so that
$0<\mathcal L_{ik}(\psi_k^{\mathrm{dec}})\le 1$ and hence
$F_{ik}(\Theta)$ and $\ell(\Theta)$ are finite. For user $i$ and cluster $k$, define
\[
F_{ik}(\Theta)\triangleq \log \rho_k + \log \mathcal{L}_{ik}(\psi_k^{\mathrm{dec}}),
\]
so that $e^{F_{ik}(\Theta)}=\rho_k\mathcal{L}_{ik}$ is exactly the unnormalized posterior score used in the E-step. The marginal log-likelihood of the mixture model is
\begin{equation}
  \label{eq:ell}
  \ell(\Theta)
  \;=\;
  \sum_{i=1}^{N}\log\sum_{k=1}^{K} e^{F_{ik}(\Theta)}
  \;=\;
  \sum_{i=1}^{N}\log\sum_{k=1}^{K} \rho_k\,\mathcal{L}_{ik}.
\end{equation}
Standard EM monotonically improves $\ell(\Theta)$ by iteratively maximizing an ELBO. We show below that our leaky EM
procedure admits the same interpretation, with a specific restricted
variational family in place of the full posterior.
 
\paragraph{Restricted variational family.}
Fix a leak coefficient $\nu\in[0,1)$ and let $z\in[K]^N$ denote a hard
assignment vector. For each user $i$, define the probability
distribution over $[K]$ as
\begin{equation}
\label{eq:qdef}
q_{ik}^{(z_i,\nu)}
\triangleq
\frac{\mathbf{1}[z_i=k]+\nu\,\mathbf{1}[z_i\neq k]}{c_\nu}.
\end{equation}
Here, $c_\nu \triangleq 1+\nu(K-1)$ is the normalizing constant.
One checks $\sum_{k}q_{ik}^{(z_i,\nu)}=1$, so $q_i^{(z_i,\nu)}$ is a
valid distribution that places mass $1/c_\nu$ on the assigned cluster
$z_i$ and mass $\nu/c_\nu$ on each of the remaining $K{-}1$ clusters.
The family interpolates between two natural extremes:
$\nu=0$ recovers the Dirac delta at $z_i$ (hard assignment), while
$\nu\to 1$ approaches the uniform distribution on $[K]$.
 
The leaky weight $\omega_{ik}=\max(\mathbf{1}[\hat z_i=k],\nu)$ from
the main text is precisely the unnormalized density of this
distribution:
\begin{equation}
  \label{eq:omega-q}
  q_{ik}^{(\hat z_i,\nu)}
  \;=\;
  \frac{\omega_{ik}}{c_\nu}.
\end{equation}
The normalizing constant $c_\nu$ does not affect any $\arg\max$ over
$\Theta$, which is why the main-text update in
Eq.~\eqref{eq:leaky-mstep} uses the unnormalized $\omega_{ik}$ directly.
 
% ---------------------------------------------------------------------
\subsection{The leaky ELBO}
\label{app:elbo}
 
For an assignment $z\in[K]^N$ and parameters $\Theta$, define
\begin{align}
\mathcal{F}_\nu(z,\Theta)
&\triangleq
\sum_{i=1}^{N}\Bigl[
\sum\nolimits_{k=1}^{K}
q_{ik}^{(z_i,\nu)}F_{ik}(\Theta)
\notag\\
&\qquad\qquad
+
H\!\bigl(q_i^{(z_i,\nu)}\bigr)
\Bigr].
\label{eq:Fnu}
\end{align}
Here, $H(q_i)=-\sum_{k}q_{ik}\log q_{ik}$ denotes the entropy of
$q_i^{(z_i,\nu)}$. Expression \eqref{eq:Fnu} is the evidence lower bound
(ELBO) associated with the restricted family \eqref{eq:qdef}.
 
\begin{lemma}[ELBO property]
\label{lem:elbo}
For every $z\in[K]^N$, $\nu\in[0,1)$, and $\Theta$,
\[
\mathcal{F}_\nu(z,\Theta)\;\le\;\ell(\Theta).
\]
\end{lemma}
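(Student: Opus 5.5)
The plan is to prove the inequality user by user. Both $\mathcal{F}_\nu(z,\Theta)$ and $\ell(\Theta)$ are sums over $i\in[N]$, so it suffices to show that for each fixed $i$,
\[
\sum_{k=1}^{K} q_{ik}^{(z_i,\nu)}F_{ik}(\Theta)+H\bigl(q_i^{(z_i,\nu)}\bigr)\;\le\;\log\sum_{k=1}^{K}e^{F_{ik}(\Theta)}.
\]
This is the standard Gibbs variational inequality applied to the distribution $q_i:=q_i^{(z_i,\nu)}$. The restricted family plays no special role here: any probability vector on $[K]$ satisfies the bound. The well-definedness remarks in the Setup (finite logits, $\rho_k>0$) ensure that every $F_{ik}$ is finite.

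The key step is the posterior/KL decomposition. Define the exact posterior $p_{ik}=e^{F_{ik}}/\sum_{k'}e^{F_{ik'}}$, which is precisely the E-step responsibility $\gamma_{ik}$ of Eq.~\eqref{eq:gamma}. Then $F_{ik}=\log p_{ik}+\log\sum_{k'}e^{F_{ik'}}$. Substituting this and using $\sum_k q_{ik}=1$ gives
\[
\sum_k q_{ik}F_{ik}+H(q_i)=\log\sum_{k'}e^{F_{ik'}}-\mathrm{KL}\bigl(q_i\,\|\,p_i\bigr).
\]
The inequality then follows because $\mathrm{KL}\ge 0$, by Jensen's inequality applied to the concave $\log$ (equivalently, Gibbs' inequality). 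Summing over $i$ yields $\mathcal{F}_\nu(z,\Theta)\le\ell(\Theta)$, with equality iff $q_i=p_i$ for every $i$.

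Alternatively, one can apply Jensen directly: $\log\sum_k e^{F_{ik}}\ge\log\sum_{k:q_{ik}>0}q_{ik}\,e^{F_{ik}}/q_{ik}\ge\sum_k q_{ik}(F_{ik}-\log q_{ik})$.

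The only delicate point is the entropy convention when $\nu=0$. In that case $q_i$ is a Dirac mass at $z_i$, so terms with $q_{ik}=0$ must be handled using $0\log 0=0$, and the Jensen sum should be restricted to the support of $q_i$. Dropping nonnegative terms $e^{F_{ik}}$ outside the support only decreases the left side inside the log, so the bound still holds: for $\nu=0$ it reads $F_{iz_i}\le\log\sum_k e^{F_{ik}}$, which is immediate. For $\nu\in(0,1)$, every $q_{ik}>0$ and the general argument applies without modification.
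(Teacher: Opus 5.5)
Your proof is correct and is essentially the paper's argument: the paper treats $\nu=0$ directly via $F_{iz_i}\le\log\sum_k e^{F_{ik}}$ and, for $\nu\in(0,1)$, applies Jensen to $\log\sum_k q_{ik}\,e^{F_{ik}}/q_{ik}$ for each user before summing over $i$, which matches your ``alternative'' route. Your main KL-decomposition route is the exact-identity version of the same bound, which the paper records separately as the slack identity in Remark~\ref{rem:kl}; it also covers $\nu=0$ without a case split once you adopt the $0\log 0=0$ convention.
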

 
\begin{proof}
If $\nu=0$, then $q_i^{(z_i,0)}$ is a Dirac delta at $z_i$, and
\begin{align*}
\mathcal F_0(z,\Theta)
&=
\sum_{i=1}^{N} F_{iz_i}(\Theta)\\
&\le
\sum_{i=1}^{N}\log\sum_{k=1}^{K} e^{F_{ik}(\Theta)}\\
&=
\ell(\Theta),
\end{align*}
since $\log\sum_k e^{F_{ik}(\Theta)} \ge F_{iz_i}(\Theta)$ for every $i$.

Now consider $\nu\in(0,1)$. For each $i$, concavity of $\log$ and
Jensen's inequality applied to the distribution $q_i^{(z_i,\nu)}$ give,
with $q_{ik}=q_{ik}^{(z_i,\nu)}$,
\begin{align*}
\log\sum_{k=1}^{K} e^{F_{ik}(\Theta)}
&=
\log\sum_{k=1}^{K}
q_{ik}\frac{e^{F_{ik}(\Theta)}}{q_{ik}}\\
&\ge
\sum_{k=1}^{K}
q_{ik}\log\frac{e^{F_{ik}(\Theta)}}{q_{ik}}\\
&=
\sum_{k=1}^{K}q_{ik}F_{ik}(\Theta)
+
H\!\bigl(q_i^{(z_i,\nu)}\bigr).
\end{align*}
Summing over $i \in [N]$ yields $\mathcal{F}_\nu(z,\Theta) \le \ell(\Theta)$.
\end{proof}

\begin{remark}[Slack interpretation]
\label{rem:kl}
The slack in Lemma~\ref{lem:elbo} is
\[
\ell(\Theta)-\mathcal{F}_\nu(z,\Theta)
=
\sum_{i=1}^{N}
\mathrm{KL}\!\left(
q_i^{(z_i,\nu)}
\,\middle\|\,
\gamma_i(\Theta)
\right),
\]
where $\gamma_i(\Theta)=(\gamma_{i1},\ldots,\gamma_{iK})$ is the
true posterior from the E-step. Soft EM closes this slack by choosing
$q_i=\gamma_i$, whereas leaky EM restricts $q_i$ to the two-level
family \eqref{eq:qdef}. The leak coefficient $\nu$ thus controls a
bias-variance trade-off: smaller $\nu$ sharpens cluster specialization
but incurs a larger KL gap to the true posterior.
\end{remark}

% ---------------------------------------------------------------------
\subsection{E-step and M-step as coordinate ascent on $\mathcal{F}_\nu$}
\label{app:coord-ascent}
 
We now show that the E-step maximizes $\mathcal{F}_\nu$ over $z$,
while the leaky M-step maximizes or increases $\mathcal{F}_\nu$ over
the decoder parameters.
 
\begin{proposition}[E-step maximizes $\mathcal{F}_\nu$ in $z$]
\label{prop:estep}
For any fixed $\Theta$,
\[
\begin{aligned}
\hat z_i
&= \arg\max_{k\in[K]}\gamma_{ik}(\Theta)\\
&= \arg\max_{k\in[K]}F_{ik}(\Theta),
\qquad i\in[N].
\end{aligned}
\]
Then $\hat z$ is a maximizer of $\mathcal{F}_\nu(\cdot,\Theta)$ over
$z\in[K]^N$.
\end{proposition}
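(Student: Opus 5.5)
The plan is to show that, for fixed $\Theta$, the objective $\mathcal{F}_\nu(z,\Theta)$ splits into a sum of per-user terms, each depending only on $z_i$. Each term can then be maximized independently, which reduces a maximization over $[K]^N$ to $N$ separate maximizations over $[K]$. Looking at \eqref{eq:Fnu}, the $i$-th summand involves only $q_i^{(z_i,\nu)}$ and $F_{i\cdot}(\Theta)$. It therefore suffices to show that, for each $i$, the function
\[
g_i(k)\triangleq \sum_{k'=1}^{K} q_{ik'}^{(k,\nu)}F_{ik'}(\Theta)+H\bigl(q_i^{(k,\nu)}\bigr)
\]
is maximized at $k=\hat z_i$.

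First I will dispose of the entropy term. By \eqref{eq:qdef}, the multiset of probabilities $\{q_{ik'}^{(k,\nu)}\}_{k'}$ is always one entry $1/c_\nu$ and $K-1$ entries $\nu/c_\nu$, whatever $k$ is. So $H(q_i^{(k,\nu)})$ is a constant independent of $k$, using the convention $0\log 0=0$ when $\nu=0$.

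Next I will rewrite the expected score. Let $S_i\triangleq\sum_{k'}F_{ik'}(\Theta)$, which is finite by the finite-logit assumption. Then
\[
\sum_{k'} q_{ik'}^{(k,\nu)}F_{ik'}(\Theta)=\frac{1}{c_\nu}\Bigl[(1-\nu)F_{ik}(\Theta)+\nu S_i\Bigr].
\]
Since $1-\nu>0$ for $\nu\in[0,1)$ and $c_\nu>0$, this expression is a strictly increasing affine function of $F_{ik}(\Theta)$ plus terms that do not depend on $k$. Hence $\arg\max_k g_i(k)=\arg\max_k F_{ik}(\Theta)$.

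It remains to identify this with the E-step rule. From \eqref{eq:gamma}, $\gamma_{ik}=e^{F_{ik}}/\sum_{k'}e^{F_{ik'}}$, and the denominator does not depend on $k$. Because $\exp$ is monotone, $\arg\max_k\gamma_{ik}=\arg\max_k F_{ik}$, which gives the equality of the two argmax expressions in the statement. Setting $z_i=\hat z_i$ for every $i$ then maximizes each summand, and therefore maximizes the sum $\mathcal{F}_\nu(\cdot,\Theta)$.

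The only delicate point is ties. When the argmax is not unique, any tie-breaking rule still picks a maximizer of each $g_i$, so $\hat z$ is \emph{a} maximizer, which is exactly what is claimed. Apart from this, the argument is bookkeeping. The key observation is that the entropy of the two-level family does not depend on $z$, and I expect that to be the only step needing care.
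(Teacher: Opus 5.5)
Your proposal is correct and follows essentially the same route as the paper. Both arguments rest on the same four steps: the entropy of the two-level family does not depend on $z_i$; the affine decomposition $\frac{1}{c_\nu}\bigl[(1-\nu)F_{iz_i}+\nu\sum_k F_{ik}\bigr]$; $\gamma_{ik}\propto e^{F_{ik}}$ being monotone in $F_{ik}$; and separability across users. Your explicit handling of ties and of the $0\log 0$ convention at $\nu=0$ are small refinements that the paper leaves implicit.
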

 
\begin{proof}
Decompose the inner expectation in \eqref{eq:Fnu} as
\begin{align}
\sum_{k=1}^{K} q_{ik}^{(z_i,\nu)} F_{ik}(\Theta)
&=
\frac{\nu}{c_\nu}
\sum_{k=1}^{K}F_{ik}(\Theta)
\notag\\
&\quad+
\frac{1-\nu}{c_\nu}F_{iz_i}(\Theta).
\label{eq:decomp}
\end{align}
The entropy $H(q_i^{(z_i,\nu)})$ depends only on the multiset of
probabilities $\{1/c_\nu,\nu/c_\nu,\ldots,\nu/c_\nu\}$, which is
invariant under the choice of $z_i$, so $H(q_i^{(z_i,\nu)})$ is also
independent of $z_i$. Therefore
\begin{align*}
\arg\max_{z_i\in[K]}\mathcal{F}_\nu(z,\Theta)
&=
\arg\max_{z_i\in[K]}F_{iz_i}(\Theta)\\
&=
\arg\max_{k\in[K]}
\bigl(\log\rho_k+\log\mathcal{L}_{ik}\bigr),
\end{align*}
which coincides with $\arg\max_k\gamma_{ik}$ since
$\gamma_{ik}\propto e^{F_{ik}}$. The maximization separates across $i$,
so the coordinatewise optimum $\hat z$ is a global optimum over
$[K]^N$.
\end{proof}
 
\begin{proposition}[Leaky M-step and $\mathcal F_\nu$]
\label{prop:mstep}
For any fixed $z\in[K]^N$ and fixed $\rho_{1:K}$, the leaky decoder
update in Eq.~\eqref{eq:leaky-mstep} is the argmax of
$\mathcal F_\nu(z,\Theta)$ over $\psi^{\mathrm{dec}}_{1{:}K}$ when
solved exactly. In a generalized-EM implementation where the update
only ensures an increase of the weighted log-likelihood objective
(e.g., one or more gradient steps), the update likewise satisfies
$\mathcal F_\nu(z,\Theta^{(t+1)})\ge\mathcal F_\nu(z,\Theta^{(t)})$.
\end{proposition}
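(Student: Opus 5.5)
The plan is to write $\mathcal F_\nu(z,\Theta)$ explicitly as a function of the decoder parameters with $z$ and $\rho_{1:K}$ held fixed, and then compare it term by term with the weighted BCE objective in Eq.~\eqref{eq:leaky-mstep}.

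\textbf{Step 1: reduce $\mathcal F_\nu$ to the weighted log-likelihood.} Expanding $F_{ik}(\Theta)=\log\rho_k+\log\mathcal L_{ik}(\psi_k^{\mathrm{dec}})$ in \eqref{eq:Fnu} gives three pieces:
\[
\sum_{i}\sum_k q_{ik}^{(z_i,\nu)}\log\rho_k,\qquad \sum_i H\bigl(q_i^{(z_i,\nu)}\bigr),\qquad \sum_i\sum_k q_{ik}^{(z_i,\nu)}\log\mathcal L_{ik}(\psi_k^{\mathrm{dec}}).
\]
The first two do not depend on $\psi^{\mathrm{dec}}_{1:K}$. The entropy term is constant, as already noted in the proof of Proposition~\ref{prop:estep}.

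\textbf{Step 2: match the remaining term to the BCE objective.} Using \eqref{eq:omega-q}, $q_{ik}^{(z_i,\nu)}=\omega_{ik}/c_\nu$ with $\omega_{ik}=\max(\mathbf 1[z_i=k],\nu)$. Since $\mathcal L_{ik}$ is a product of Bernoulli likelihoods, $\log\mathcal L_{ik}=-\sum_m \mathrm{BCE}\bigl(\sigma(\hat R_{\psi_k}(\tau_{i,m})),y_{i,m}\bigr)$. Therefore
\[
\mathcal F_\nu(z,\Theta)=C(z,\rho,\nu)-\frac{1}{c_\nu}\sum_{k=1}^K G_k(\psi_k^{\mathrm{dec}}),
\]
where $G_k$ is exactly the objective minimized in Eq.~\eqref{eq:leaky-mstep}. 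The key observation is that this sum separates over $k$: the encoder is frozen and each $\psi_k^{\mathrm{dec}}$ enters only through $G_k$. Hence maximizing $\mathcal F_\nu$ jointly over $\psi^{\mathrm{dec}}_{1:K}$ is equivalent to minimizing each $G_k$ separately. Because $c_\nu>0$, this scaling does not change the argmax, which proves the exact-solution claim.

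\textbf{Step 3: the generalized-EM case.} The same identity shows that
\[
\mathcal F_\nu(z,\Theta^{(t+1)})-\mathcal F_\nu(z,\Theta^{(t)})=\frac{1}{c_\nu}\sum_k\bigl[G_k(\psi_k^{(t)})-G_k(\psi_k^{(t+1)})\bigr].
\]
This is nonnegative whenever the update does not increase $\sum_k G_k$, i.e., whenever it does not decrease the weighted log-likelihood. That is precisely the generalized-EM assumption. Note that $\rho$ is unchanged in this step, so $C$ does not move.

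\textbf{Main obstacle.} The only delicate point is bookkeeping rather than analysis. I need to confirm that the sign and scaling of BCE match $\log\mathcal L_{ik}$ exactly, with no per-segment averaging that would reweight users with different $M_i$. I also need to be careful that the hard assignment in the M-step is the given $z$ rather than a recomputed one, so that the constant $C$ is truly fixed.

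A gradient step only guarantees ascent under a suitable step size. I will therefore state the result conditionally on the update achieving a non-decrease, as the proposition itself does, rather than trying to prove that gradient descent achieves it.
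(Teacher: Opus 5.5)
Your proposal is correct and follows essentially the same route as the paper. Both arguments drop the $\Theta$-independent entropy and $\sum_{i,k}\omega_{ik}\log\rho_k$ terms, use $q_{ik}^{(z_i,\nu)}=\omega_{ik}/c_\nu$, identify $\log\mathcal L_{ik}$ with the negative summed BCE, and then use the decoupling across $k$ and the positive scale $1/c_\nu$ to recover the leaky M-step. The only addition is your Step 3 identity $\mathcal F_\nu(z,\Theta^{(t+1)})-\mathcal F_\nu(z,\Theta^{(t)})=\frac{1}{c_\nu}\sum_k[G_k(\psi_k^{(t)})-G_k(\psi_k^{(t+1)})]$, which makes the generalized-EM claim explicit where the paper's proof leaves it implicit.
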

 
\begin{proof}
The entropy term in \eqref{eq:Fnu} does not depend on $\Theta$. Using
\eqref{eq:omega-q}, the $\Theta$-dependent part of $\mathcal{F}_\nu$ is
\begin{align}
\sum_{i,k} q_{ik}^{(z_i,\nu)} F_{ik}(\Theta)
&=
\frac{1}{c_\nu}
\sum_{i,k}\omega_{ik}\log\rho_k
\notag\\
&\quad+
\frac{1}{c_\nu}
\sum_{i,k}\omega_{ik}
\log\mathcal{L}_{ik}(\psi_k^{\mathrm{dec}}).
\label{eq:theta-part}
\end{align}
With $\rho_{1:K}$ fixed, the term $\sum_{i,k}\omega_{ik}\log\rho_k$ is a
constant in $\psi^{\mathrm{dec}}$. The maximization over each $\psi_k^{\mathrm{dec}}$ therefore decouples
across $k$ and reduces to
\[
\psi_k^{\mathrm{dec}\,(t+1)}
=
\arg\max_{\psi_k^{\mathrm{dec}}}
\sum_{i=1}^{N}\omega_{ik}
\log\mathcal{L}_{ik}(\psi_k^{\mathrm{dec}}).
\]
Using
\[
\log\mathcal{L}_{ik}(\psi_k^{\mathrm{dec}})
=
-\sum_{m=1}^{M_i}
\mathrm{BCE}\!\left(
\sigma\!\bigl(\hat R_{\psi_k}(\tau_{i,m})\bigr),
y_{i,m}
\right),
\]

this update is equivalently written as
\begin{align*}
\psi_k^{\mathrm{dec}\,(t+1)}
&=
\arg\min_{\psi_k^{\mathrm{dec}}}
\sum_{i=1}^{N}\sum_{m=1}^{M_i}
\omega_{ik}\,
\mathrm{BCE}\!\left(
\sigma\!\bigl(\hat R_{\psi_k}(\tau_{i,m})\bigr),
y_{i,m}
\right).
\end{align*}
The overall constant $1/c_\nu$ does not affect the optimizer. This is
exactly the leaky M-step update in Eq.~\eqref{eq:leaky-mstep}.

\end{proof}
 
% ---------------------------------------------------------------------
\subsection{Monotone convergence}
\label{app:converge}
 
\begin{theorem}[Monotone improvement of the leaky ELBO]
\label{thm:leaky-mono}
Let $(z^{(t)},\Theta^{(t)})$ denote the iterates produced by alternating
(i) an E-step that maximizes $\mathcal F_\nu(z,\Theta^{(t)})$ over $z$,
and (ii) a leaky decoder update with leak coefficient $\nu\in[0,1)$ that
increases $\mathcal F_\nu(z^{(t+1)},\Theta)$ over the decoder parameters,
with $\rho_{1:K}$ held fixed. Then the sequence
$\{\mathcal F_\nu(z^{(t)},\Theta^{(t)})\}_{t\ge 0}$ is monotonically
non-decreasing and converges to a finite limit.
\end{theorem}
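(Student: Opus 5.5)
The plan is the standard coordinate-ascent argument: chain the E-step and M-step inequalities, then use Lemma~\ref{lem:elbo} to get an upper bound. Index the iterations so that from $(z^{(t)},\Theta^{(t)})$ the E-step produces $z^{(t+1)}$ using $\Theta^{(t)}$. The decoder update then produces $\Theta^{(t+1)}$ using $z^{(t+1)}$, with $\rho_{1:K}$ unchanged throughout.

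\textbf{E-step inequality.} By Proposition~\ref{prop:estep}, $z^{(t+1)}$ maximizes $\mathcal F_\nu(\cdot,\Theta^{(t)})$ over all of $[K]^N$. In particular it does at least as well as the previous assignment $z^{(t)}$, so
\[
\mathcal F_\nu(z^{(t+1)},\Theta^{(t)})\ge \mathcal F_\nu(z^{(t)},\Theta^{(t)}).
\]

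\textbf{M-step inequality.} By Proposition~\ref{prop:mstep}, with $z^{(t+1)}$ and $\rho_{1:K}$ fixed, the leaky decoder update gives
\[
\mathcal F_\nu(z^{(t+1)},\Theta^{(t+1)})\ge \mathcal F_\nu(z^{(t+1)},\Theta^{(t)}).
\]
This holds whether the update is the exact argmax or only an improving (generalized-EM) step, which is exactly hypothesis (ii) of the theorem. One point needs care here. Proposition~\ref{prop:mstep} establishes that the weighted BCE objective is an affine image of $\mathcal F_\nu$. The constant term $\frac{1}{c_\nu}\sum_{i,k}\omega_{ik}\log\rho_k$ and the entropy term do not depend on $\psi^{\mathrm{dec}}$, and the multiplier $1/c_\nu$ is positive. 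Hence any decrease of the weighted BCE loss is an increase of $\mathcal F_\nu$. Chaining the two inequalities gives monotonicity:
\[
\mathcal F_\nu(z^{(t+1)},\Theta^{(t+1)})\ge\mathcal F_\nu(z^{(t)},\Theta^{(t)}).
\]

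\textbf{Upper bound and convergence.} By Lemma~\ref{lem:elbo}, $\mathcal F_\nu(z^{(t)},\Theta^{(t)})\le \ell(\Theta^{(t)})$. Under the standing assumption $0<\mathcal L_{ik}\le 1$ and $\sum_k\rho_k=1$, we have $\ell(\Theta)=\sum_i\log\sum_k\rho_k\mathcal L_{ik}\le \sum_i\log 1=0$. So the sequence is bounded above by $0$, uniformly in $t$. It is also finite at $t=0$, because the finite-logit assumption makes every $F_{ik}$ finite and $H(q_i)\le\log K$. A non-decreasing real sequence that is bounded above converges, by the monotone convergence theorem, to a finite limit in $[\mathcal F_\nu(z^{(0)},\Theta^{(0)}),0]$.

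\textbf{Main obstacle.} The only delicate point is not the convergence itself but the uniformity of the upper bound. We must avoid relying on $\ell(\Theta^{(t)})$ being monotone, which is not guaranteed, since leaky EM improves only the ELBO. The bound $\ell\le 0$ sidesteps this entirely, and it holds because the likelihoods are Bernoulli probabilities. If one wanted a sharper statement, such as convergence of the iterates or stationarity, further assumptions would be needed. The theorem claims only convergence of the values, so the argument above suffices. The finitely many assignments in $[K]^N$ are not needed for this.
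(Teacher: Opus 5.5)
Your proposal is correct and follows the paper's argument. You take the E-step inequality from Proposition~\ref{prop:estep} and the decoder-update inequality from Proposition~\ref{prop:mstep}, chain them for monotonicity, and bound the sequence above by $\ell(\Theta)\le 0$ via Lemma~\ref{lem:elbo} to conclude convergence; your added remarks (the positive affine link between the weighted BCE and $\mathcal F_\nu$, finiteness at $t=0$, and not needing monotonicity of $\ell$) are sound elaborations of steps the paper leaves implicit.
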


\begin{proof}
By Proposition~\ref{prop:estep}, the E-step computes
$z^{(t+1)}\in\arg\max_{z}\mathcal{F}_\nu(z,\Theta^{(t)})$, and therefore
\[
\mathcal{F}_\nu\bigl(z^{(t+1)},\Theta^{(t)}\bigr)
\;\ge\;
\mathcal{F}_\nu\bigl(z^{(t)},\Theta^{(t)}\bigr).
\]
By Proposition~\ref{prop:mstep}, the leaky decoder update increases
$\mathcal{F}_\nu(z^{(t+1)},\Theta)$ over the decoder parameters (with
$\rho_{1:K}$ held fixed), so
\[
\mathcal{F}_\nu\bigl(z^{(t+1)},\Theta^{(t+1)}\bigr)
\;\ge\;
\mathcal{F}_\nu\bigl(z^{(t+1)},\Theta^{(t)}\bigr).
\]
Combining the two inequalities shows that
$\{\mathcal{F}_\nu(z^{(t)},\Theta^{(t)})\}_{t\ge 0}$ is monotonically
non-decreasing.

Finally, for each user $i$,
\[
0 < \sum_{k=1}^{K}\rho_k \mathcal{L}_{ik} \le 1,
\]
which implies
\[
\ell(\Theta)
=
\sum_{i=1}^{N}\log \sum_{k=1}^{K}\rho_k \mathcal{L}_{ik}
\le 0.
\]
By Lemma~\ref{lem:elbo}, we have
\[
\mathcal{F}_\nu(z,\Theta)\le \ell(\Theta)\le 0,
\]
so the monotone sequence
$\{\mathcal{F}_\nu(z^{(t)},\Theta^{(t)})\}_{t\ge 0}$ is bounded above and
therefore converges to a finite limit.
\end{proof}
 
\begin{corollary}[Hard EM as the $\nu=0$ case]
\label{cor:hard}
At $\nu=0$, $q_{ik}^{(z_i,0)}=\mathbf{1}[z_i=k]$, and
\[
\mathcal{F}_0(z,\Theta)
\;=\;
\sum_{i=1}^{N}\log\bigl(\rho_{z_i}\,\mathcal{L}_{i z_i}\bigr),
\]
which is the standard hard-assignment ELBO. Theorem~\ref{thm:leaky-mono} then specializes to the corresponding hard-assignment monotone-improvement result under the same fixed-prior
setting.
\end{corollary}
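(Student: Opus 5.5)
The statement immediately above is Corollary~\ref{cor:hard}. My plan is to obtain it by specializing everything at $\nu=0$ and then reusing Theorem~\ref{thm:leaky-mono}; no new argument should be needed.

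\textbf{Step 1: the variational distribution at $\nu=0$.} We have $c_0=1+0\cdot(K-1)=1$. Substituting into \eqref{eq:qdef} gives $q_{ik}^{(z_i,0)}=\mathbf{1}[z_i=k]$, the Dirac delta at $z_i$. Consistently, the leaky weight becomes $\omega_{ik}=\max(\mathbf{1}[\hat z_i=k],0)=\mathbf{1}[\hat z_i=k]$.

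\textbf{Step 2: the ELBO at $\nu=0$.} Insert the Dirac delta into \eqref{eq:Fnu}. The expectation term collapses to $F_{iz_i}(\Theta)=\log\rho_{z_i}+\log\mathcal L_{iz_i}$. The entropy of a point mass is $H=-1\cdot\log 1=0$, using the convention $0\log 0=0$ for the clusters with zero mass. Summing over $i$ yields $\mathcal F_0(z,\Theta)=\sum_i\log(\rho_{z_i}\mathcal L_{iz_i})$, which is the usual hard-assignment (classification-likelihood) objective. This agrees with the first case in the proof of Lemma~\ref{lem:elbo}.

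\textbf{Step 3: the M-step at $\nu=0$.} With $\omega_{ik}=\mathbf{1}[\hat z_i=k]$, the leaky M-step \eqref{eq:leaky-mstep} reduces exactly to the hard M-step displayed before it in the main text. Each decoder is trained only on the users currently assigned to its cluster.

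\textbf{Step 4: invoking the theorem.} Theorem~\ref{thm:leaky-mono} is stated for $\nu\in[0,1)$, so it covers $\nu=0$. I will check that its two ingredients hold without modification at this endpoint.
\begin{itemize}
\item Proposition~\ref{prop:estep}: at $\nu=0$ the decomposition \eqref{eq:decomp} is simply $F_{iz_i}$, and the entropy is identically $0$, so $\hat z_i=\arg\max_k F_{ik}$ still maximizes $\mathcal F_0$.
\item Proposition~\ref{prop:mstep}: it applies verbatim with indicator weights.
\end{itemize}
Boundedness follows from $\mathcal F_0\le\ell\le0$. Hence the hard-EM iterates give a non-decreasing sequence of $\mathcal F_0$ values that converges to a finite limit.

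\textbf{Main obstacle.} The only delicate point is the entropy of the degenerate distribution, since the terms $q\log q$ with $q=0$ are formally undefined. I handle this with the standard $0\log0=0$ convention, equivalently by taking the limit $\nu\to0^+$, under which $H(q_i^{(z_i,\nu)})\to0$. Everything else is direct substitution.
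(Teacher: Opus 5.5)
Your proposal is correct and follows essentially the same route as the paper: substitute $\nu=0$ (so $c_0=1$ and $q_{ik}^{(z_i,0)}=\mathbf{1}[z_i=k]$), collapse the expectation term to $F_{iz_i}(\Theta)$ with zero entropy, and read off the hard-EM guarantee from Theorem~\ref{thm:leaky-mono}, which already covers $\nu\in[0,1)$. Your explicit $0\log 0=0$ convention and your check that the leaky M-step reduces to the hard M-step are small additions to what the paper leaves implicit; the paper handles the degenerate case explicitly only in the $\nu=0$ branch of the proof of Lemma~\ref{lem:elbo}.
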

 
\begin{remark}[Scope of the analysis]
\label{rem:scope}
Theorem~\ref{thm:leaky-mono} concerns the decoder parameters
$\psi^{\mathrm{dec}}_{1{:}K}$ under the assumption that the cluster
priors $\rho_{1:K}$ are held fixed at each iteration. Any additional
stabilizer that modifies $\rho_k$---such as Dirichlet-style prior
smoothing or balanced reassignment of users across clusters---falls
outside this fixed-prior scope and is therefore omitted from the
coordinate-ascent argument.
\end{remark}

% 이거 검토
\subsection{Connections to Prior EM Variants}
\label{app:related-theory}
Our analysis instantiates the standard variational view of EM, in which the
E- and M-steps perform coordinate ascent on an evidence lower bound, equivalently
the negative free energy, rather than directly on the log-likelihood~\citep{ref63,neal1998view,beal2003variational}.
Within this view, our leaky update is most directly related to Classification
EM~\citep{celeux1992cem}, which trains each component on its hard-assigned points.
As shown in Corollary~\ref{cor:hard}, leaky EM recovers Classification EM at
$\nu = 0$, and otherwise relaxes it by placing a small positive weight $\nu$ on
the non-selected clusters. The resulting two-level assignment is a restricted
variational family in the sense of truncated variational EM~\citep{lucke2016truncated},
which likewise generalizes hard EM and guarantees monotone improvement of a
free-energy bound. The two families differ in how they treat the non-selected
clusters. Truncated EM keeps the exact posterior on a selected subset and sets
the remainder to exact zero, so it interpolates between hard assignment and the
full posterior. Our family instead keeps a uniform positive floor $\nu$ on the
non-selected clusters and so interpolates between hard assignment and uniform
cross-cluster sharing, which stabilizes small clusters. This floor gives leaky EM
a flavor of annealing-based EM~\citep{ueda1998daem}, though the mechanism differs.
Deterministic-annealing EM smooths assignments by tempering the posterior with a
temperature parameter, whereas our leak fixes a two-level assignment independent
of the posterior. Theorem~\ref{thm:leaky-mono} thus specializes this
coordinate-ascent guarantee to our family under a fixed prior.

\begin{table*}[t]
\centering
\small
\caption{B-Pref-style label-noise levels used in our binary-feedback setting.}
\label{tab:bpref_noise_levels}
\begin{tabular}{lccccc}
\toprule
Noise condition 
& Stochastic $\beta$ 
& Myopic $\lambda_{\mathrm{myo}}$
& Mistake $\epsilon$ 
& Skip $p_{\mathrm{skip}}$ 
& Equal $\delta$ \\
\midrule
Noise condition 1 & 1.00 & 0.90 & 0.10 & 0.10 & 0.10 \\
Noise condition 2 & 0.50 & 0.85 & 0.20 & 0.25 & 0.20 \\
\bottomrule
\end{tabular}
\end{table*}

\begin{table*}[t]
\centering
\caption{User clustering quality under sparse feedback across four environments.}
\label{tab:cluster_quality_others_4env}
\setlength{\tabcolsep}{4pt}
\small
\begin{tabular}{ll cc cc cc}
\toprule
Env & Method & \multicolumn{2}{c}{$K{=}2$} & \multicolumn{2}{c}{$K{=}3$} & \multicolumn{2}{c}{$K{=}5$} \\
\cmidrule(lr){3-4}\cmidrule(lr){5-6}\cmidrule(lr){7-8}
 & & SH & CH & SH & CH & SH & CH \\
\midrule
\multirow{6}{*}{HalfCheetah}
 & Random           & 0.028 & 1.5   & -0.077 & 1.4   & -0.217 & 0.9   \\
 & K-Means          & 0.543 & 289.0 & 0.370  & 167.7 & 0.236  & 100.6 \\
 & MLP              & 0.239 & 96.9  & 0.177  & 86.2  & 0.099  & 46.4  \\
 & K-Means (Axis)   & 0.585 & 295.4 & 0.494 & 165.3 & 0.132 & 101.4 \\
 & W-K-Means (Axis) & 0.523 & 194.0 & 0.443  & 160.9 & 0.243  & 119.1 \\
 & \textbf{PREC}             & \textbf{0.605} & \textbf{296.2} & \textbf{0.567} & \textbf{259.1} & \textbf{0.572} & \textbf{243.0} \\
\midrule
\multirow{6}{*}{Ant}
 & Random           & 0.028 & 1.5   & -0.077 & 1.4   & -0.217 & 0.9   \\
 & K-Means          & 0.503 & 160.8 & 0.477  & 166.3 & 0.138  & 115.0 \\
 & MLP              & 0.421 & 76.2  & 0.362  & 53.9  & 0.077  & 53.0  \\
 & K-Means (Axis)   & 0.629 & 438.6 & 0.534  & 334.5 & 0.236  & 189.6 \\
 & W-K-Means (Axis) & 0.647 & 440.2 & 0.565 & 360.2 & 0.273 & 216.6 \\
 & \textbf{PREC}             & \textbf{0.673} & \textbf{491.7} & \textbf{0.616} & \textbf{478.5} & \textbf{0.654} & \textbf{514.6} \\
\midrule
\multirow{6}{*}{Walker2d}
 & Random           & 0.028 & 1.5   & -0.077 & 1.4   & -0.217 & 0.9   \\
 & K-Means          & 0.636 & 439.1 & 0.525  & 343.1 & 0.241  & 207.8 \\
 & MLP              & 0.577 & 395.8 & 0.480  & 280.4 & 0.180  & 170.9 \\
 & K-Means (Axis)   & 0.635 & 439.2 & 0.536  & 339.0 & 0.254  & 193.4 \\
 & W-K-Means (Axis) & \textbf{0.643} & \textbf{439.4} & 0.554 & 347.0 & 0.254 & 200.9 \\
 & \textbf{PREC}             & 0.625 & 437.3 & \textbf{0.579} & \textbf{414.2} & \textbf{0.365} & \textbf{300.6} \\
\midrule
\multirow{6}{*}{Hopper}
 & Random           & 0.028 & 1.5   & -0.077 & 1.4   & -0.217 & 0.9   \\
 & K-Means          & 0.551 & 267.6 & 0.506  & 175.0 & 0.247  & 151.8 \\
 & MLP              & 0.289 & 66.4  & 0.271  & 62.9  & 0.134  & 53.4  \\
 & K-Means (Axis)   & 0.631 & 439.5 & 0.571  & 355.2 & 0.312  & 214.3 \\
 & W-K-Means (Axis) & 0.649 & \textbf{440.5} & 0.572 & 349.4 & 0.293 & 211.9 \\
 & \textbf{PREC}             & \textbf{0.653} & 400.3 & \textbf{0.666} & \textbf{491.6} & \textbf{0.572} & \textbf{475.5} \\
\bottomrule
\end{tabular}
\end{table*}

\section{Implementation Details}
\label{app:implementation}
\subsection{Noise Injection}
\label{noise_injection}
To evaluate robustness to imperfect human feedback, we adapt the scripted-teacher noise models from B-Pref~\cite{ref36} to our offline binary-feedback setting. B-Pref originally defines irrational teachers for pairwise segment comparisons, including stochastic preferences, myopic preferences, accidental mistakes, skipped queries, and equal-preference responses. Since our data consist of single trajectory segments labeled independently by each user, we reformulate these five teacher irrationalities for a single-segment threshold-labeling problem.

For each user $i$ and segment $m$, we compute the segment-level script-defined preference score using the same user model as in the main text:
\[
S_{i,m} \equiv s_i(\tau_{i,m}) = w_i^\top \phi_e(\tau_{i,m}).
\]
We then obtain a clean binary label by thresholding this score,
\[
    y_{i,m} = \mathbb{I}\{S_{i,m} > \eta\},
\]
where $\eta=0.5$ is the decision threshold used in our experiments.

\smallskip
\noindent\textbf{Stochastic feedback.}
B-Pref applies a Bradley--Terry model to the cumulative-return difference between two trajectory segments. In our binary-feedback setting, we instead compare the single-segment script-defined preference score against the threshold $\eta$:
\[
    P(y_{i,m}=1)
    =
    \sigma\left(\beta(S_{i,m}-\eta)\right).
\]
Here, $\beta$ is an inverse-temperature parameter controlling the sharpness of the stochastic threshold rule. The case $\beta=1$ uses the unscaled score difference but still produces stochastic labels, whereas the deterministic threshold rule is recovered as $\beta\rightarrow\infty$.

\smallskip
\noindent\textbf{Myopic feedback.}
B-Pref models a myopic teacher by discounting earlier information within each segment, so that the teacher places more emphasis on later timesteps. We use the same principle in our single-segment setting. Let $\phi_e^{\mathrm{myopic}}(\tau_{i,m})$ denote the same behavioral descriptors as $\phi_e(\tau_{i,m})$, but computed with temporally weighted segment statistics using
\[
u_t =
\frac{\lambda_{\mathrm{myo}}^{T-t}}{\sum_{t'=1}^{T} \lambda_{\mathrm{myo}}^{T-t'}}.
\]
The myopic score is then
\[
S^{\mathrm{myopic}}_{i,m}
=
w_i^\top \phi_e^{\mathrm{myopic}}(\tau_{i,m}).
\]
The label is obtained by thresholding $S^{\mathrm{myopic}}_{i,m}$ at $\eta$. When $\lambda_{\mathrm{myo}}<1$, later timesteps receive larger weights, making the scripted user more sensitive to the end of the segment.

\smallskip
\noindent \textbf{Mistake feedback.}
B-Pref models accidental mistakes by flipping the preference label with probability $\epsilon$. We use the same binary-flip mechanism in our threshold-labeling setting:
\[
    y_{i,m}
    \leftarrow
    1-y_{i,m}
    \quad
    \text{with probability } \epsilon.
\]
Thus, positive labels become negative and negative labels become positive with probability $\epsilon$.

\smallskip
\noindent \textbf{Skip feedback.}
In B-Pref, skip noise is reward-conditional: a teacher may skip a query when both compared segments are considered insufficiently informative or low-quality. Since our data collection setting is offline and each data point contains only a single labeled segment, we model skipped feedback as random missing supervision. Specifically, each segment label is removed from reward-model training with probability $p_{\mathrm{skip}}$ by assigning it zero sample weight.

\smallskip
\noindent \textbf{Equal-preference feedback.}
B-Pref assigns an equal-preference response when the two compared segments have similar cumulative returns. In our single-segment setting, the analogous case occurs when a segment lies close to the decision boundary. We therefore assign an uncertain soft label,
\[
    y_{i,m}=0.5
    \quad \text{if} \quad
    |S_{i,m}-\eta|<\delta,
\]
where $\delta$ controls the width of the ambiguity region around the threshold.

In addition to adapting the five B-Pref noise types from pairwise comparison to binary threshold labeling, we consider a label-level mixed-noise setting. For each noisy user and each segment label, one of the five noise types is sampled uniformly at random and applied to that label. Therefore, different labels from the same user may be corrupted by different noise mechanisms. When noise is applied, it is applied to all users in the dataset. We use two composite noise levels, summarized in Table~\ref{tab:bpref_noise_levels}.

\subsection{PREC Implementation Details}
\label{prec_arch}
This section details the reward-model architecture and the leaky EM training
schedule that instantiate the method of Section~\ref{sec:method}.

\smallskip
\noindent\textbf{SPR encoder architecture.}
The shared SPR encoder $f_{\psi^{\mathrm{enc}}}$ is implemented as an MLP that maps a state--action pair $(s_t,a_t)$ to a $128$-dimensional latent representation. It consists of two hidden layers of width $256$, with ReLU activations and dropout applied after each hidden layer. The input layer is adjusted to the state and action dimensions of each environment. A one-step forward-dynamics predictor $g$ maps $z_t=f_{\psi^{\mathrm{enc}}}(s_t,a_t)$ to the state difference $s_{t+1}-s_t$.

\smallskip
\noindent\textbf{SPR pretraining.}
We pretrain the encoder and forward-dynamics predictor on a pool of one-step transitions. Both networks are optimized with AdamW for $20$ epochs using a batch size of $1024$, a learning rate of $3\times10^{-4}$, and a weight decay $10^{-4}$. After pretraining, the predictor is discarded, and the encoder is frozen throughout reward-head and policy training.

\smallskip
\noindent\textbf{Reward-head architecture.} On top of the frozen SPR encoder $f_{\psi^{\mathrm{enc}*}}$ from Section~\ref{3_2}, PREC instantiates one decoder head per cluster. To reduce the variance of reward estimates under sparse labels, each cluster head $h_{\psi_k^{\mathrm{dec}}}$ is realized as an ensemble
of three MLPs, each with two hidden layers of width $128$, ReLU activations, and a dropout $0.1$. Each head produces a per-transition reward logit
$\hat r_{\psi_k}(s_t,a_t)$. Heads are trained with binary cross-entropy against the good/bad labels.

\smallskip
\noindent\textbf{Leaky EM schedule.}
PREC alternates between reassigning users (E-step) and updating the cluster
reward heads (leaky M-step). In the E-step, the responsibility
$\gamma_{ik}$ is computed from the per-user label log-likelihood and the cluster
prior as in Eq.~\eqref{eq:gamma}, and the hard assignment
$\hat z_i=\arg\max_k\gamma_{ik}$ from Eq.~\eqref{eq:zhat} is used for the
subsequent head training. In the leaky M-step, each segment of user $i$
contributes to cluster $k$ with weight $\omega_{ik}=\max(\mathbf{1}[\hat
z_i=k],\nu)$ as in Eq.~\eqref{eq:omega}, where we set the leak coefficient
$\nu=0.05$; this lets users with low responsibility still contribute weakly,
stabilizing early iterations and preventing small clusters from collapsing. We
run EM for $8$ iterations: each head is trained for $6$ epochs at
initialization, and for $3$ additional epochs after every E-step.

\smallskip
\noindent\textbf{Optimization hyperparameters.}
All reward heads are optimized with AdamW using a learning rate of
$3\times10^{-4}$, weight decay $10^{-5}$, batch size $256$, and dropout $0.1$.

\subsection{Sampled Preference Distributions}
\label{prefdistribure}
In this section, we visualize the preference distribution of the 30 users used in Section~\ref{sec4_4}. As shown in Figure~\ref{pref_dist_fig}, each user is represented as a circle on a simplex, where each vertex corresponds to the weight assigned to one feature. A user located closer to a particular vertex places a higher weight on the corresponding feature. When the top, bottom-right, and bottom-left vertices of the simplex are denoted by ${feat}_1$, ${feat}_2$, and ${feat}_3$, respectively, these features correspond to postural stability, directional consistency, and target-speed tracking in Ant and HalfCheetah. In Hopper and Walker2d, they correspond to gait regularity, postural stability, and energy efficiency, respectively. Using this preference distribution, we conducted the experiments in Section~\ref{sec4_4} over three random seeds.

\begin{figure}[t]
    \centering
    \includegraphics[width=0.45\textwidth]{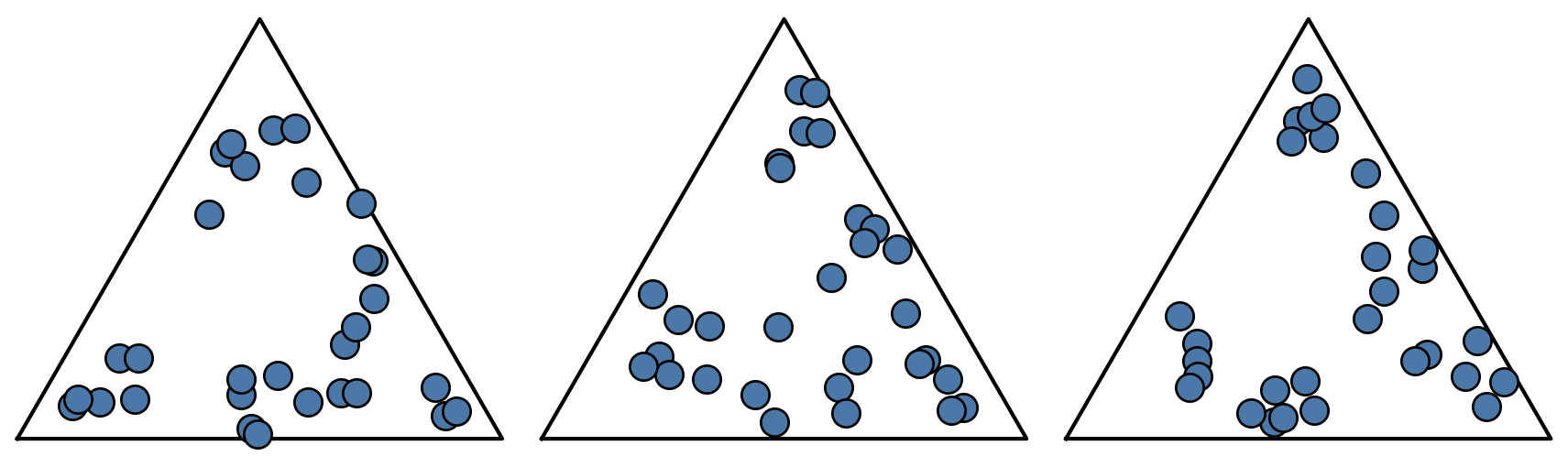}
    \caption{Sampled preference distributions of 30 users used in Section~\ref{sec4_4}.}
    \label{pref_dist_fig}
\end{figure}

\subsection{Baseline Implementation Details}
\label{baseline_implementation}
The three multi-user preference baselines we compare against were originally
proposed under assumptions that differ from ours: IdEmb~\cite{ref74} and VPL~\cite{ref39}  target language-model preference data, and RLDHP~\cite{ref29} targets online preference-based RL with pairwise comparisons. Our setting is instead offline, uses single-segment good/bad labels (rather than pairwise comparisons), operates on continuous-control state-action trajectories, and extracts a deployable policy through IQL. Below we describe how we re-instantiate each method's core modeling assumption under these constraints.

\smallskip
\noindent \textbf{IdEmb.} IdEmb models annotator disagreement in language-model
classification by learning a per-annotator identity embedding (and an
annotation embedding aggregating a user's past labels), adding these to the
classification token representation of a transformer so that the classifier can adjust
its prediction per annotator. We transfer only its central assumption---that a
user's preferences can be captured by a learned identity vector---to a
continuous-control reward model. Each user is assigned a learnable identity embedding, concatenated with the raw state-action input rather than added to a classification token as in the original; we keep only this identity embedding and omit the annotation-embedding component, since in our offline setting a user's labels are already the training supervision.

\begin{table}[t]
\centering
\caption{IQL training configuration.}
\label{tab:iql-config}
\begin{tabular}{ll}
\toprule
\textbf{Setting} & \textbf{Value} \\
\midrule
Gradient steps & 100{,}000 \\
Batch size & 256 \\
Learning rate & $3\times10^{-4}$ \\
Discount $\gamma$ & 0.99 \\
\midrule
Expectile $\tau$ & 0.7 \\
AWR temperature $\beta$ & 3.0 \\
AWR weight clip & 100 \\
Target Polyak rate & 0.005 \\
\midrule
Hidden size (actor / Q / V) & 256 \\
Hidden layers & 3 \\
Actor & tanh-squashed Gaussian \\
Critic & twin Q + separate V \\
\bottomrule
\end{tabular}
\end{table}

\smallskip
\noindent \textbf{VPL.} VPL casts reward learning as a latent-variable
problem: a variational encoder infers a distribution over a hidden, per-user
context $z$ from a set of that user's preference annotations, and a
latent-conditioned reward $r(s,z)$ is trained so that the implied
Bradley--Terry likelihood explains the user's pairwise comparisons, regularized
by a KL term toward a prior. We retain the latent-context idea but adapt it in
three ways. First, because our feedback consists of single trajectory segments
labeled good or bad rather than pairwise comparisons, we replace the pairwise
Bradley--Terry likelihood with a per-segment binary objective: the segment
reward is passed through a sigmoid and trained against the binary label, while
the KL regularizer toward the prior is kept. Second, in place of the original self-attention encoder over comparison pairs, we infer the per-user posterior by encoding each of the user's labeled segments together with its binary label and aggregating them into a Gaussian over $z$. Third, whereas the original samples $z$ from a learned prior at training time and re-infers it from held-out queries at test time, our sparse single-segment regime makes these components unnecessary: we fix the prior to a standard normal and cache each user's posterior mean after training, so that every user is served by the reward model conditioned on their own inferred context.

\smallskip
\noindent \textbf{RLDHP.} RLDHP stabilizes reward learning from diverse,
inconsistent preferences by routing the reward model through a stochastic
latent space: an encoder maps each input to a Gaussian latent, a decoder maps a
sampled latent to a scalar reward, the latent is constrained to stay close to a
fixed non-parameterized reference distribution via a strong KL penalty, and an
ensemble is aggregated by up-weighting members whose latents diverge more from
the reference. We preserve these two mechanisms---the strong latent regularization and the confidence-weighted ensembling---and adapt the supervision and the training regime to our setting. The original supervises the reward with the pairwise Bradley--Terry cross-entropy inside an online PEBBLE loop that alternates querying and policy learning; we instead train the same encoder-decoder ensemble fully offline on the fixed pool of single-segment good/bad labels, replacing the pairwise loss with a per-segment binary objective. The KL constraint toward a standard normal (weighted by $\phi$) and the KL-based confidence weighting across ensemble members are retained from the original.

\subsection{Policy Learning Details}
\noindent \label{policy_learning}

We train all policies with Implicit Q-Learning (IQL)~\cite{ref68}.
The same configuration is used for all policy-learning experiments;
hyperparameters are summarized in Table~\ref{tab:iql-config}.

\subsection{Computing Infrastructure}
All experiments were run on a Linux SLURM cluster using CPU-only nodes. Each job was allocated 4 CPU cores and 24 GB of memory on nodes equipped with Intel Xeon Gold 5320 CPUs. No GPU resources were used. The operating
system was Rocky Linux 9.4. The software environment used Python 3.9.25 and PyTorch 2.7.1, with NumPy 2.0.2, SciPy 1.13.1, Matplotlib 3.9.2, Gymnasium 0.29.1, MuJoCo 3.2.5, h5py 3.14.0, scikit-learn 1.6.1, and tqdm 4.67.1.

%\begin{table}[t]
%\centering
%\caption{Effect of the per-user label budget on social welfare. }
%\label{tab:hopper}
%\small
%\setlength{\tabcolsep}{6pt}
%\renewcommand{\arraystretch}{1.1}
%\begin{tabular}{l c c c c}
%\toprule
%\multirow{2}{*}[-3pt]{Metric} & \multirow{2}{*}[-3pt]{Feedback count $N$} & \multicolumn{3}{c}{Method} \\
%\cmidrule(lr){3-5}
% & & Pooled & PREC(3) & Indiv \\
%\midrule
%\multirow{4}{*}{Utilitarian} & 10  & 0.695 & 0.643 & 0.559 \\
%            & 50  & 0.749 & 0.780 & 0.672 \\
%            & 150 & 0.629 & 0.762 & 0.717 \\
%            & 500 & 0.648 & 0.704 & 0.741 \\
%\cmidrule(lr){1-5}
%\multirow{4}{*}{Egalitarian} & 10  & 0.521 & 0.492 & 0.255 \\
%            & 50  & 0.457 & 0.573 & 0.485 \\
%            & 150 & 0.525 & 0.531 & 0.522 \\
%            & 500 & 0.523 & 0.585 & 0.583 \\
%\cmidrule(lr){1-5}
%\multirow{4}{*}{Nash}        & 10  & 0.688 & 0.631 & 0.513 \\
%            & 50  & 0.722 & 0.765 & 0.658 \\
%            & 150 & 0.625 & 0.743 & 0.705 \\
%            & 500 & 0.642 & 0.698 & 0.732 \\
%\bottomrule
%\end{tabular}
%\end{table}

\begin{figure*}[t]
    \centering
    \includegraphics[width=\textwidth]{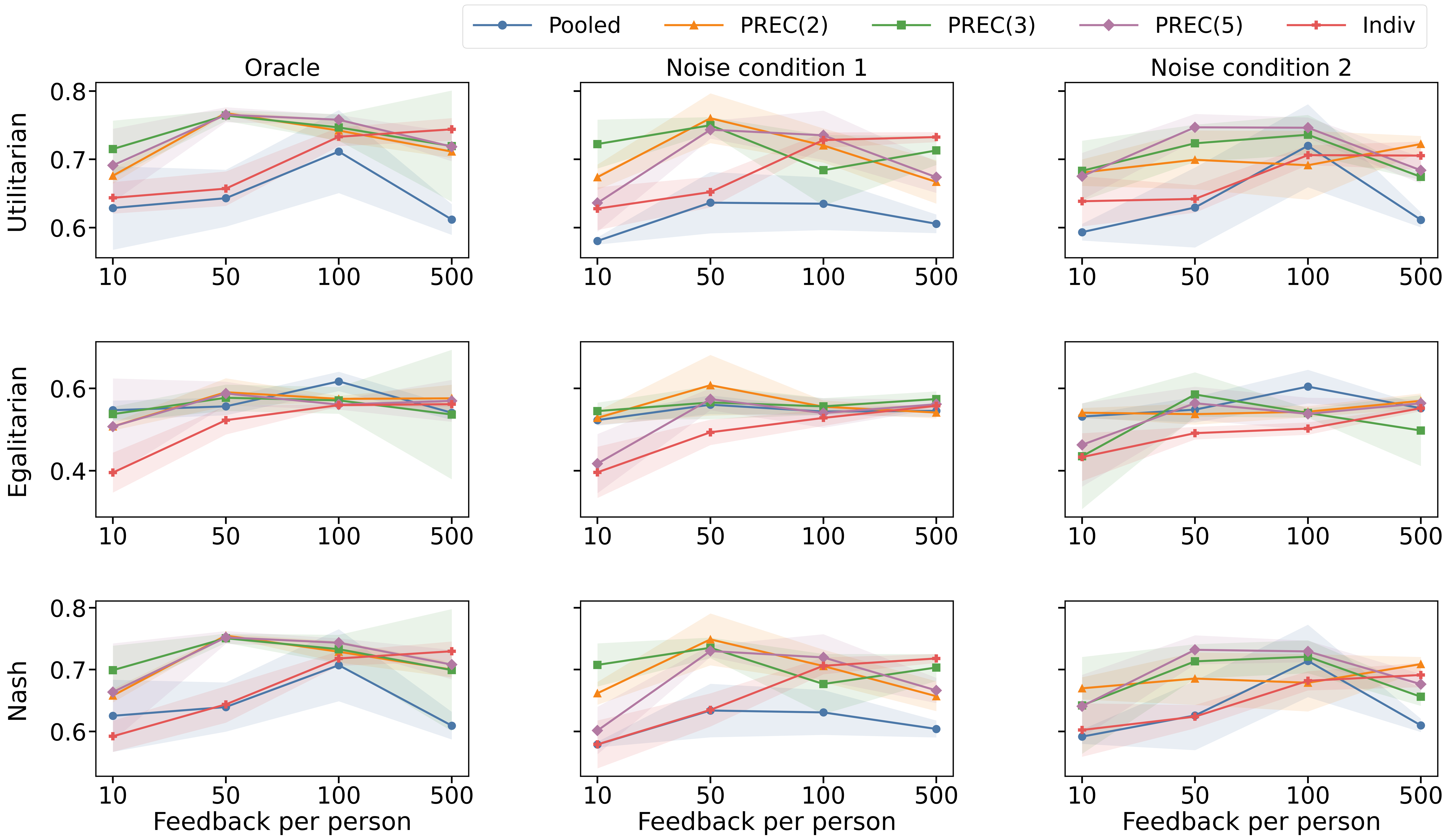}
    \caption{\textbf{Effect of the per-user feedback budget on social welfare.} Across different noise conditions, PREC remains stable and competitive in the sparse-feedback regime, while Indiv benefits from larger per-user feedback budgets. Pooled is competitive for egalitarian welfare with few labels but degrades as heterogeneous feedback increases.}
    \label{fig_table_6}
\end{figure*}

\section{Additional Results}
\label{app:additional_results}

\subsection{Detailed Results of User Clustering Experiments in Four MuJoCo Environments}
\label{expand_table}
We expand the results reported as averages across environments in Table~\ref{table_1} into four individual environments, as shown in Table~\ref{tab:cluster_quality_others_4env}. Overall, our method performs better than the baselines, even against baselines that use preference-axis information. Its lead grows as $K$ increases.

\subsection{Qualitative Evaluation of User Clustering}
\label{clustering_vis}
In this section, we further visualize the clustering results presented in Table~\ref{tab:cluster_quality_others_4env}. We visualize the distribution of 30 users on a simplex, where each user is represented by the weights assigned to three features. Each vertex of the simplex corresponds to one feature, and proximity to a vertex indicates a higher weight on that feature. Users shown in the same color are clustered into the same group. We conduct clustering experiments under various preference distribution settings that may arise in realistic scenarios. Figure~\ref{ant1}, Figure~\ref{ant2}, Figure~\ref{ant3}, and Figure~\ref{ant4} present the clustering results in the Ant environment. Figure~\ref{cheetah1}, Figure~\ref{cheetah2}, Figure~\ref{cheetah3}, and Figure~\ref{cheetah4} present the clustering results in the HalfCheetah environment. Figure~\ref{hopper1}, Figure~\ref{hopper2}, Figure~\ref{hopper3}, and Figure~\ref{hopper4} present the clustering results in the Hopper environment. Figure~\ref{walker1}, Figure~\ref{walker2}, Figure~\ref{walker3}, and Figure~\ref{walker4} present the clustering results in the Walker2d environment. In each figure, the first, second, and third rows show the clustering results obtained using two, three, and five groups, respectively.

\subsection{Analysis of the Effective Operating Regime}
\label{Operating_Regime}
We characterize the label-budget conditions under which group-level alignment is preferable by progressively increasing the per-user feedback count within the sparse regime, using Hopper under the same protocol as in Section~\ref{sec4_4}. As shown in Figure~\ref{fig_table_6}, PREC remains stable across different feedback budgets, generally outperforming or remaining competitive with the baselines. This trend persists even under noisy feedback. In particular, when the number of feedback labels is small, ranging from 10 to 50 per user, PREC substantially outperforms both Pooled and Indiv. As the feedback budget increases, however, Indiv improves monotonically across all welfare metrics and noise conditions. When each user provides 500 feedback labels, Indiv achieves the highest utilitarian and Nash welfare, surpassing PREC variants. This suggests that when sufficiently many labels are available for each individual user, fully individualized alignment can achieve higher social welfare than group-level alignment. Nevertheless, this does not make PREC obsolete in the feedback-abundant regime. Even when ample per-user data are available, it may still be impractical from the deployer's perspective to maintain a separate policy for every individual preference profile. In robotic domains, policy management, validation, and deployment often require the number of policies to remain limited. Under such capacity constraints, PREC offers a practical trade-off by maintaining a small set of policies while still achieving competitive social welfare. We also observe that Pooled achieves the highest or near-highest egalitarian welfare when the feedback budget is small. However, as the number of feedback labels increases from 100 to 500, its performance sharply decreases, indicating that a single shared policy struggles to preserve social welfare when exposed to increasingly diverse and conflicting preference signals.

\begin{figure*}[t]
    \centering
    \makebox[\textwidth]{%
        \makebox[0.04\textwidth]{}% 좌측 K열 공간
        \begin{tikzpicture}
            \node[anchor=south west,inner sep=0] (img)
                {\includegraphics[width=0.96\textwidth]{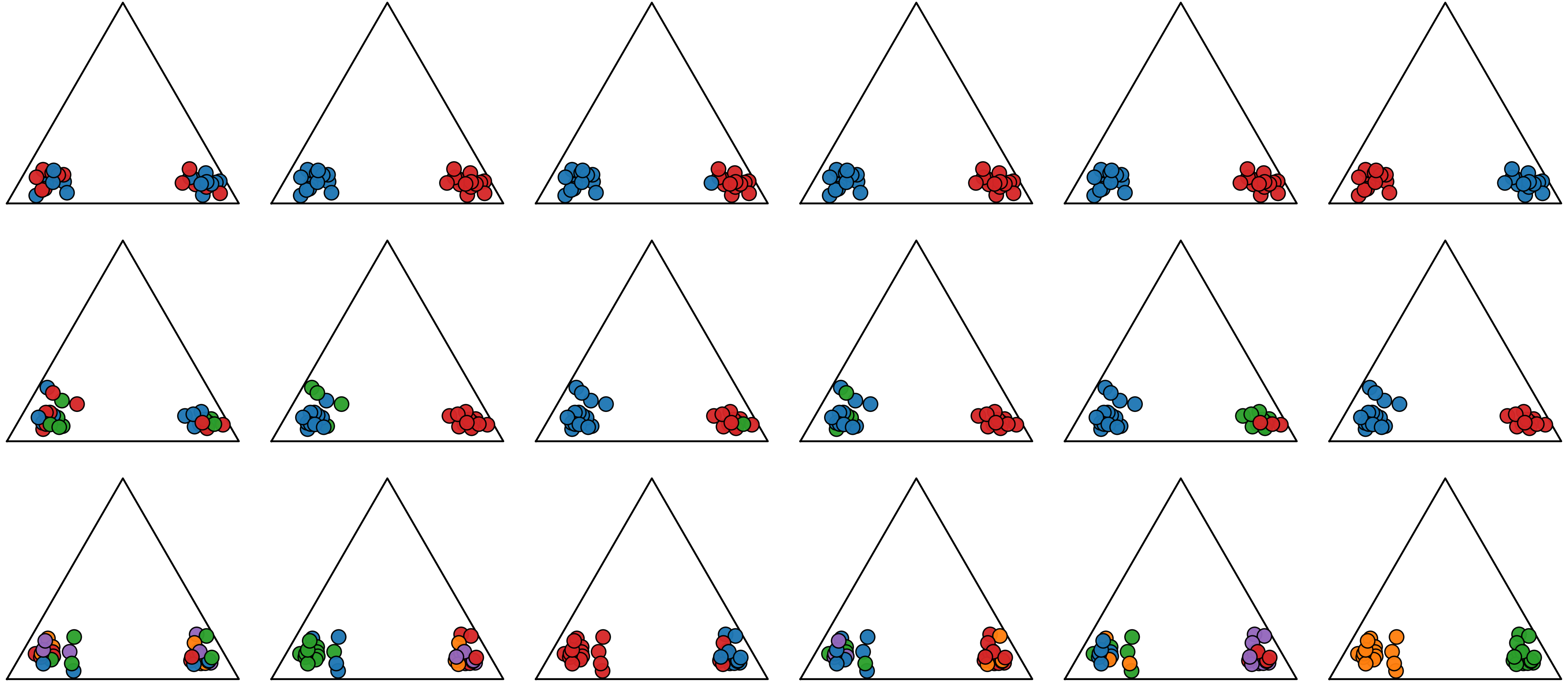}};
            \begin{scope}[x={(img.south east)},y={(img.north west)}]
                % --- 상단 method 라벨 (각 x값을 독립적으로 조절) ---
                \node at (0.080, 1.03) {Random};
                \node at (0.243, 1.03) {K-Means};
                \node at (0.413, 1.03) {MLP};
                \node at (0.583, 1.03) {K-Means(Axis)};
                \node at (0.750, 1.03) {W-K-Means(Axis)};
                \node at (0.922, 1.03) {PREC};
                % --- 좌측 세로 K 라벨 ---
                \node[rotate=90,overlay] at (-0.03, 0.87) {$K{=}2$};
                \node[rotate=90,overlay] at (-0.03, 0.51) {$K{=}3$};
                \node[rotate=90,overlay] at (-0.03, 0.16) {$K{=}5$};
            \end{scope}
        \end{tikzpicture}%
    }
    \caption{Clustering results of various methods on Ant when the preference distribution is divided into two groups.}
    \label{ant1}
\end{figure*}

\begin{figure*}[t]
    \centering
    \makebox[\textwidth]{%
        \makebox[0.04\textwidth]{}% 좌측 K열 공간
        \begin{tikzpicture}
            \node[anchor=south west,inner sep=0] (img)
                {\includegraphics[width=0.96\textwidth]{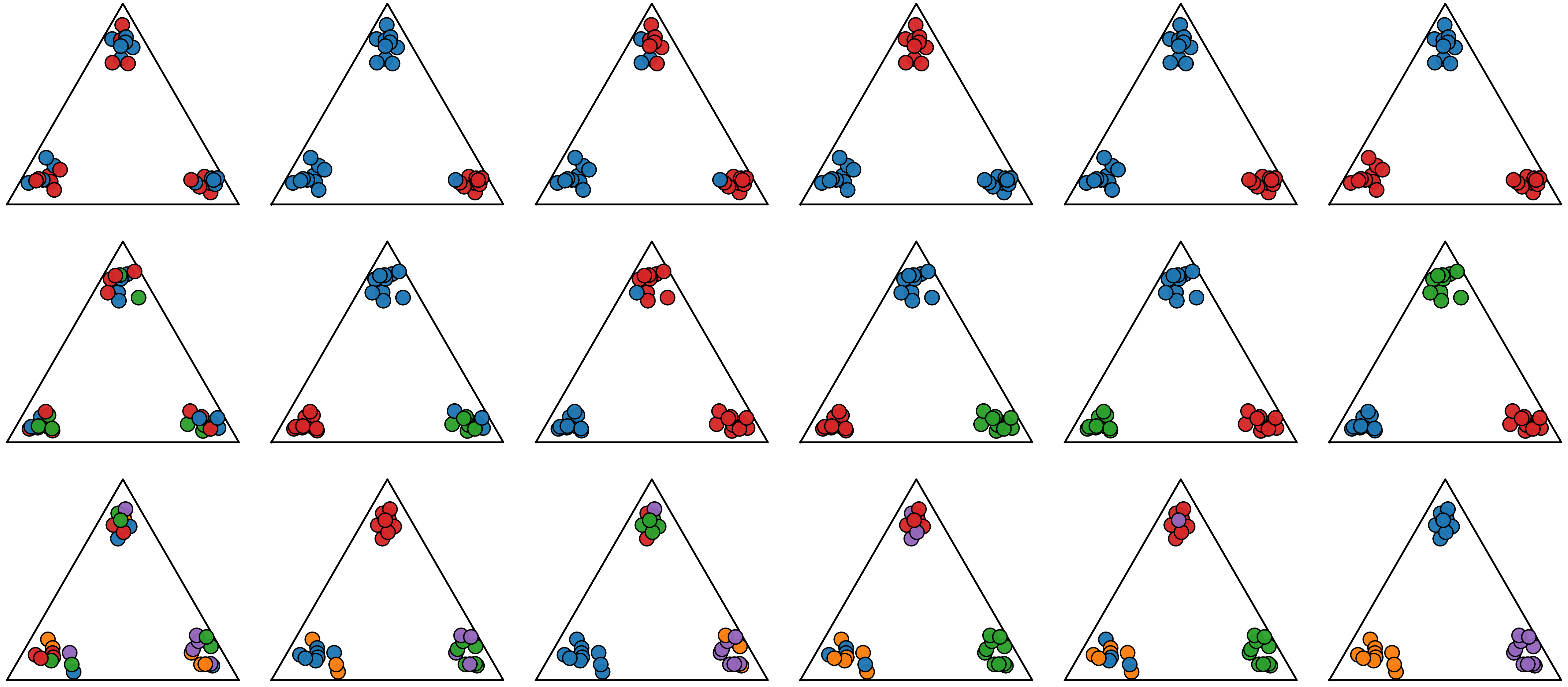}};
            \begin{scope}[x={(img.south east)},y={(img.north west)}]
                % --- 상단 method 라벨 (각 x값을 독립적으로 조절) ---
                \node at (0.080, 1.03) {Random};
                \node at (0.243, 1.03) {K-Means};
                \node at (0.413, 1.03) {MLP};
                \node at (0.583, 1.03) {K-Means(Axis)};
                \node at (0.750, 1.03) {W-K-Means(Axis)};
                \node at (0.922, 1.03) {PREC};
                % --- 좌측 세로 K 라벨 ---
                \node[rotate=90,overlay] at (-0.03, 0.87) {$K{=}2$};
                \node[rotate=90,overlay] at (-0.03, 0.51) {$K{=}3$};
                \node[rotate=90,overlay] at (-0.03, 0.16) {$K{=}5$};
            \end{scope}
        \end{tikzpicture}%
    }
    \caption{Clustering results of various methods on Ant when the preference distribution is divided into three groups.}
    \label{ant2}
\end{figure*}

\begin{figure*}[t]
    \centering
    \makebox[\textwidth]{%
        \makebox[0.04\textwidth]{}% 좌측 K열 공간
        \begin{tikzpicture}
            \node[anchor=south west,inner sep=0] (img)
                {\includegraphics[width=0.96\textwidth]{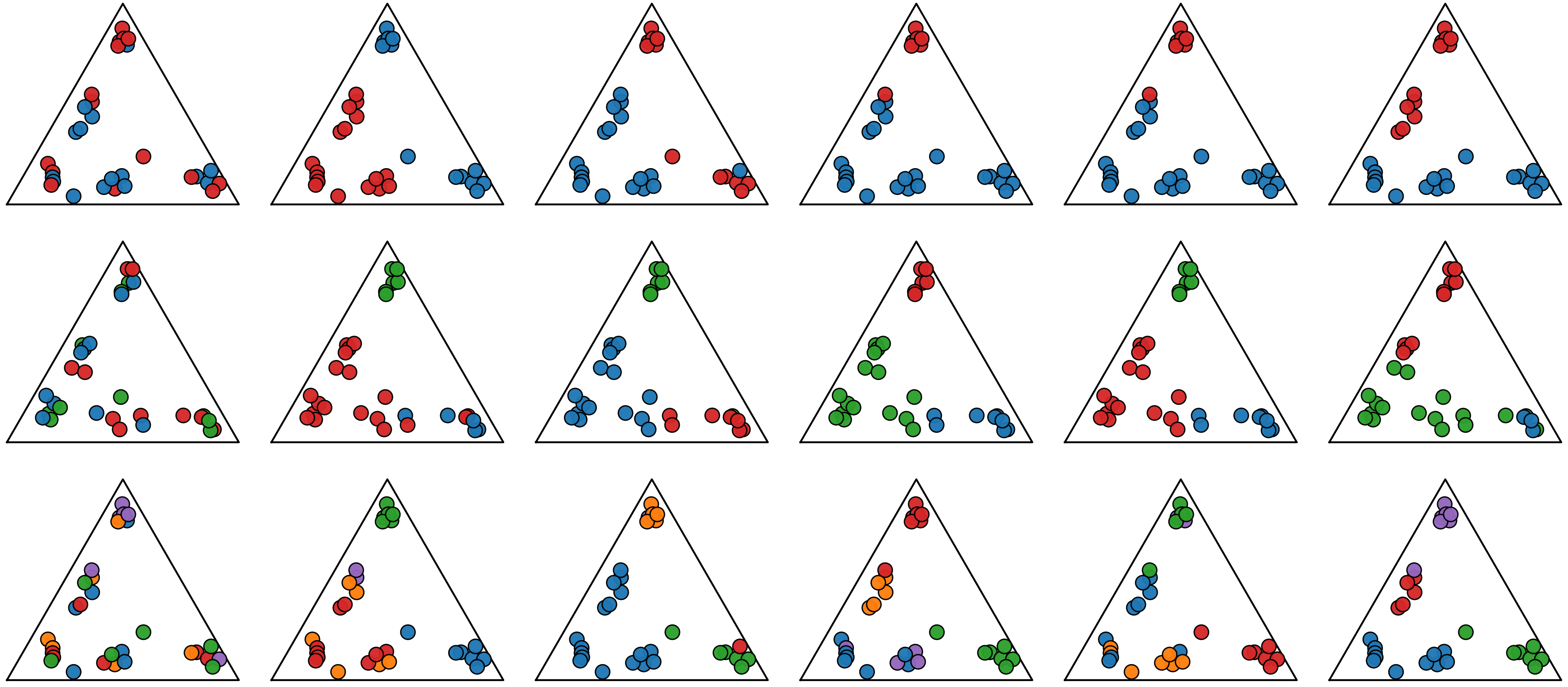}};
            \begin{scope}[x={(img.south east)},y={(img.north west)}]
                % --- 상단 method 라벨 (각 x값을 독립적으로 조절) ---
                \node at (0.080, 1.03) {Random};
                \node at (0.243, 1.03) {K-Means};
                \node at (0.413, 1.03) {MLP};
                \node at (0.583, 1.03) {K-Means(Axis)};
                \node at (0.750, 1.03) {W-K-Means(Axis)};
                \node at (0.922, 1.03) {PREC};
                % --- 좌측 세로 K 라벨 ---
                \node[rotate=90,overlay] at (-0.03, 0.87) {$K{=}2$};
                \node[rotate=90,overlay] at (-0.03, 0.51) {$K{=}3$};
                \node[rotate=90,overlay] at (-0.03, 0.16) {$K{=}5$};
            \end{scope}
        \end{tikzpicture}%
    }
    \caption{Clustering results of various methods on Ant when the preference distribution is spread near the edge of the simplex.}
    \label{ant3}
\end{figure*}

\begin{figure*}[t]
    \centering
    \makebox[\textwidth]{%
        \makebox[0.04\textwidth]{}% 좌측 K열 공간
        \begin{tikzpicture}
            \node[anchor=south west,inner sep=0] (img)
                {\includegraphics[width=0.96\textwidth]{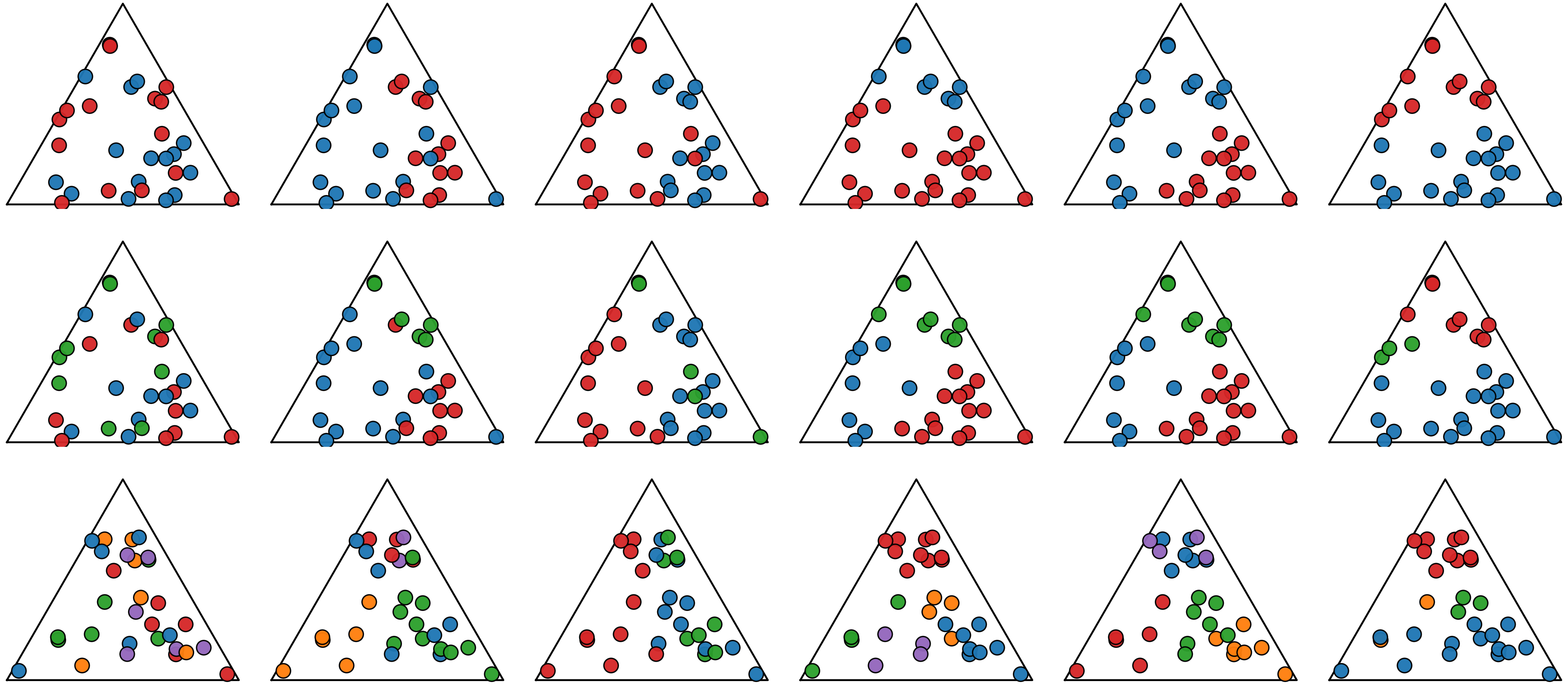}};
            \begin{scope}[x={(img.south east)},y={(img.north west)}]
                % --- 상단 method 라벨 (각 x값을 독립적으로 조절) ---
                \node at (0.080, 1.03) {Random};
                \node at (0.243, 1.03) {K-Means};
                \node at (0.413, 1.03) {MLP};
                \node at (0.583, 1.03) {K-Means(Axis)};
                \node at (0.750, 1.03) {W-K-Means(Axis)};
                \node at (0.922, 1.03) {PREC};
                % --- 좌측 세로 K 라벨 ---
                \node[rotate=90,overlay] at (-0.03, 0.87) {$K{=}2$};
                \node[rotate=90,overlay] at (-0.03, 0.51) {$K{=}3$};
                \node[rotate=90,overlay] at (-0.03, 0.16) {$K{=}5$};
            \end{scope}
        \end{tikzpicture}%
    }
    \caption{Clustering results of various methods on Ant when the preference distribution is spread randomly.}
    \label{ant4}
\end{figure*}

%%%%%%%%%%%%%%%%%%%%%%%%%%%%%%%%%%%%%%%%%%%%%%%%%%%%%%%%%%%%%%%%%%%%%%%%%%%%
\begin{figure*}[t]
    \centering
    \makebox[\textwidth]{%
        \makebox[0.04\textwidth]{}% 좌측 K열 공간
        \begin{tikzpicture}
            \node[anchor=south west,inner sep=0] (img)
                {\includegraphics[width=0.96\textwidth]{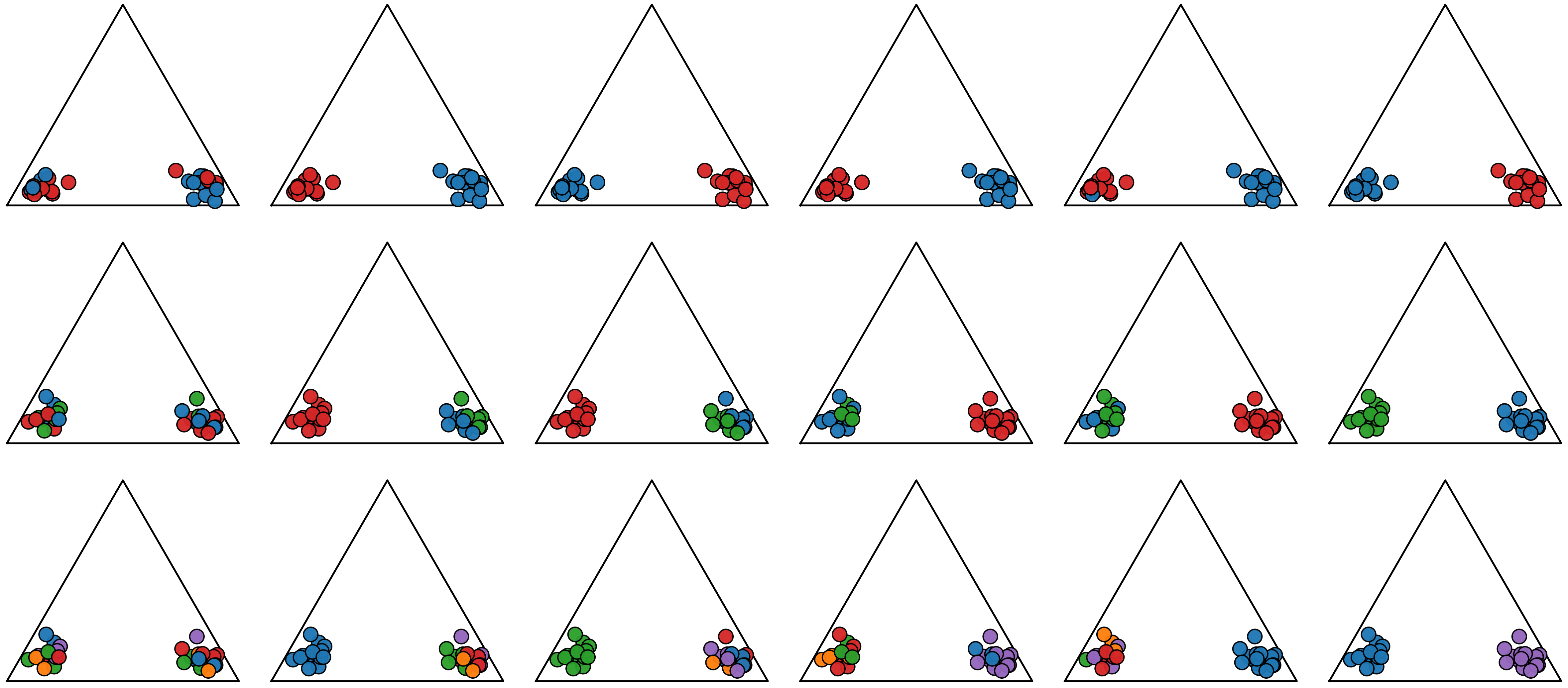}};
            \begin{scope}[x={(img.south east)},y={(img.north west)}]
                % --- 상단 method 라벨 (각 x값을 독립적으로 조절) ---
                \node at (0.080, 1.03) {Random};
                \node at (0.243, 1.03) {K-Means};
                \node at (0.413, 1.03) {MLP};
                \node at (0.583, 1.03) {K-Means(Axis)};
                \node at (0.750, 1.03) {W-K-Means(Axis)};
                \node at (0.922, 1.03) {PREC};
                % --- 좌측 세로 K 라벨 ---
                \node[rotate=90,overlay] at (-0.03, 0.87) {$K{=}2$};
                \node[rotate=90,overlay] at (-0.03, 0.51) {$K{=}3$};
                \node[rotate=90,overlay] at (-0.03, 0.16) {$K{=}5$};
            \end{scope}
        \end{tikzpicture}%
    }
    \caption{Clustering results of various methods on HalfCheetah when the preference distribution is divided into two groups.}
    \label{cheetah1}
\end{figure*}

\begin{figure*}[t]
    \centering
    \makebox[\textwidth]{%
        \makebox[0.04\textwidth]{}% 좌측 K열 공간
        \begin{tikzpicture}
            \node[anchor=south west,inner sep=0] (img)
                {\includegraphics[width=0.96\textwidth]{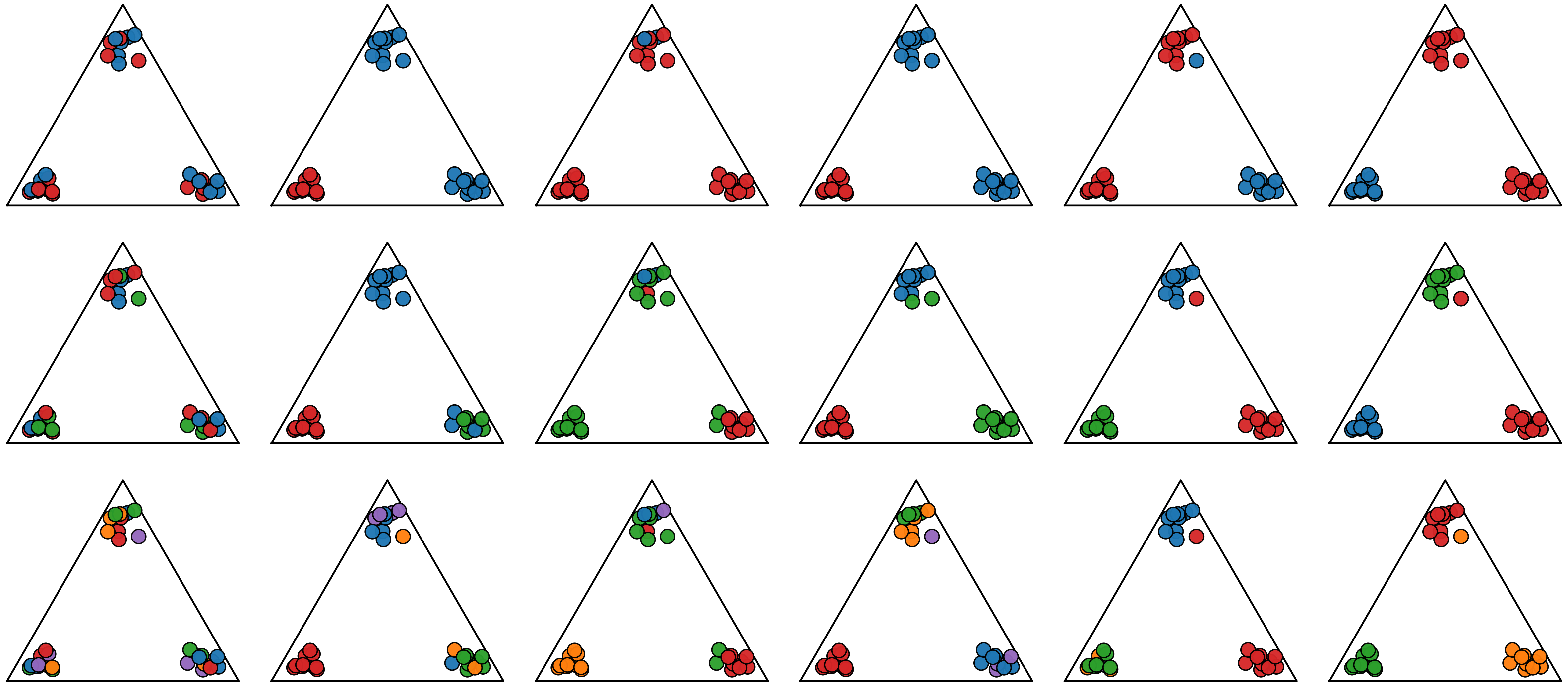}};
            \begin{scope}[x={(img.south east)},y={(img.north west)}]
                % --- 상단 method 라벨 (각 x값을 독립적으로 조절) ---
                \node at (0.080, 1.03) {Random};
                \node at (0.243, 1.03) {K-Means};
                \node at (0.413, 1.03) {MLP};
                \node at (0.583, 1.03) {K-Means(Axis)};
                \node at (0.750, 1.03) {W-K-Means(Axis)};
                \node at (0.922, 1.03) {PREC};
                % --- 좌측 세로 K 라벨 ---
                \node[rotate=90,overlay] at (-0.03, 0.87) {$K{=}2$};
                \node[rotate=90,overlay] at (-0.03, 0.51) {$K{=}3$};
                \node[rotate=90,overlay] at (-0.03, 0.16) {$K{=}5$};
            \end{scope}
        \end{tikzpicture}%
    }
    \caption{Clustering results of various methods on HalfCheetah when the preference distribution is divided into three groups.}
    \label{cheetah2}
\end{figure*}

\begin{figure*}[t]
    \centering
    \makebox[\textwidth]{%
        \makebox[0.04\textwidth]{}% 좌측 K열 공간
        \begin{tikzpicture}
            \node[anchor=south west,inner sep=0] (img)
                {\includegraphics[width=0.96\textwidth]{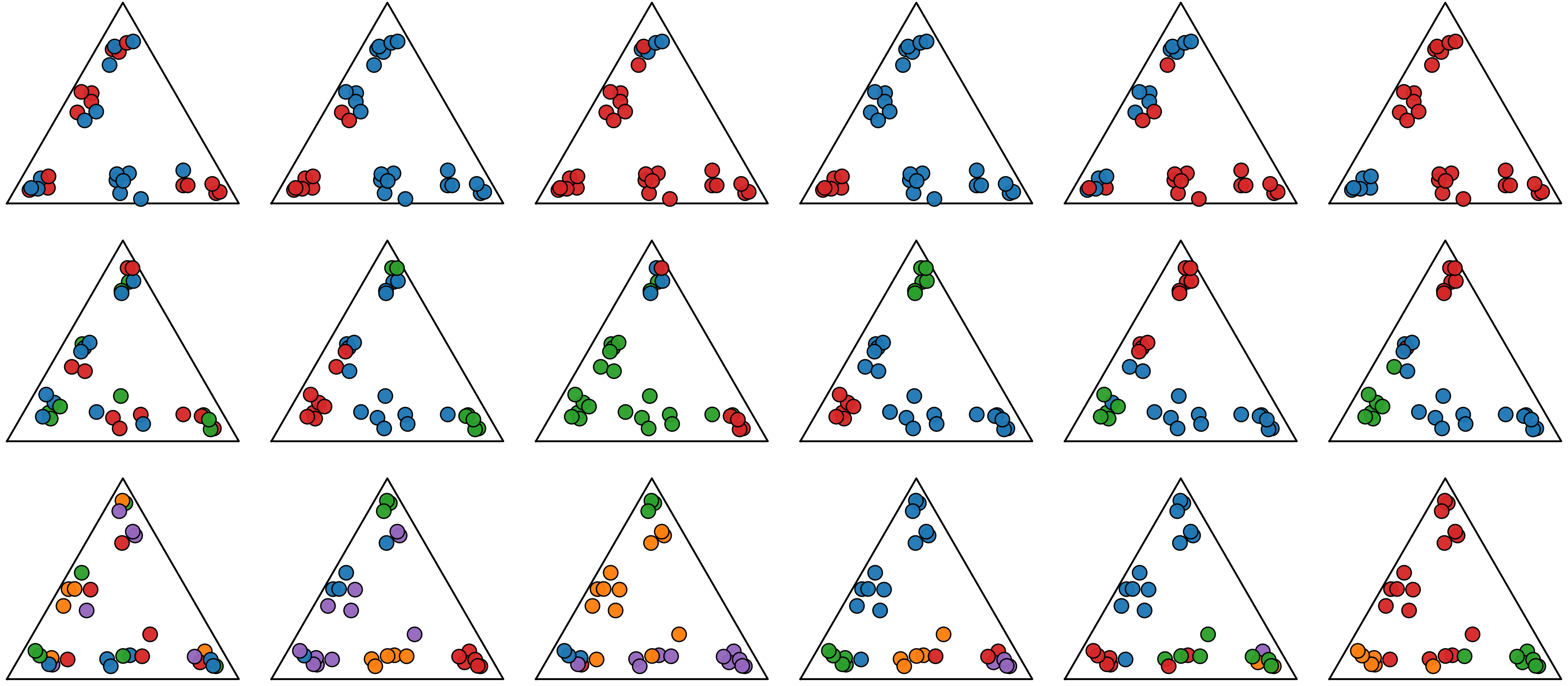}};
            \begin{scope}[x={(img.south east)},y={(img.north west)}]
                % --- 상단 method 라벨 (각 x값을 독립적으로 조절) ---
                \node at (0.080, 1.03) {Random};
                \node at (0.243, 1.03) {K-Means};
                \node at (0.413, 1.03) {MLP};
                \node at (0.583, 1.03) {K-Means(Axis)};
                \node at (0.750, 1.03) {W-K-Means(Axis)};
                \node at (0.922, 1.03) {PREC};
                % --- 좌측 세로 K 라벨 ---
                \node[rotate=90,overlay] at (-0.03, 0.87) {$K{=}2$};
                \node[rotate=90,overlay] at (-0.03, 0.51) {$K{=}3$};
                \node[rotate=90,overlay] at (-0.03, 0.16) {$K{=}5$};
            \end{scope}
        \end{tikzpicture}%
    }
    \caption{Clustering results of various methods on HalfCheetah when the preference distribution is spread near the edge of the simplex.}
    \label{cheetah3}
\end{figure*}

\begin{figure*}[t]
    \centering
    \makebox[\textwidth]{%
        \makebox[0.04\textwidth]{}% 좌측 K열 공간
        \begin{tikzpicture}
            \node[anchor=south west,inner sep=0] (img)
                {\includegraphics[width=0.96\textwidth]{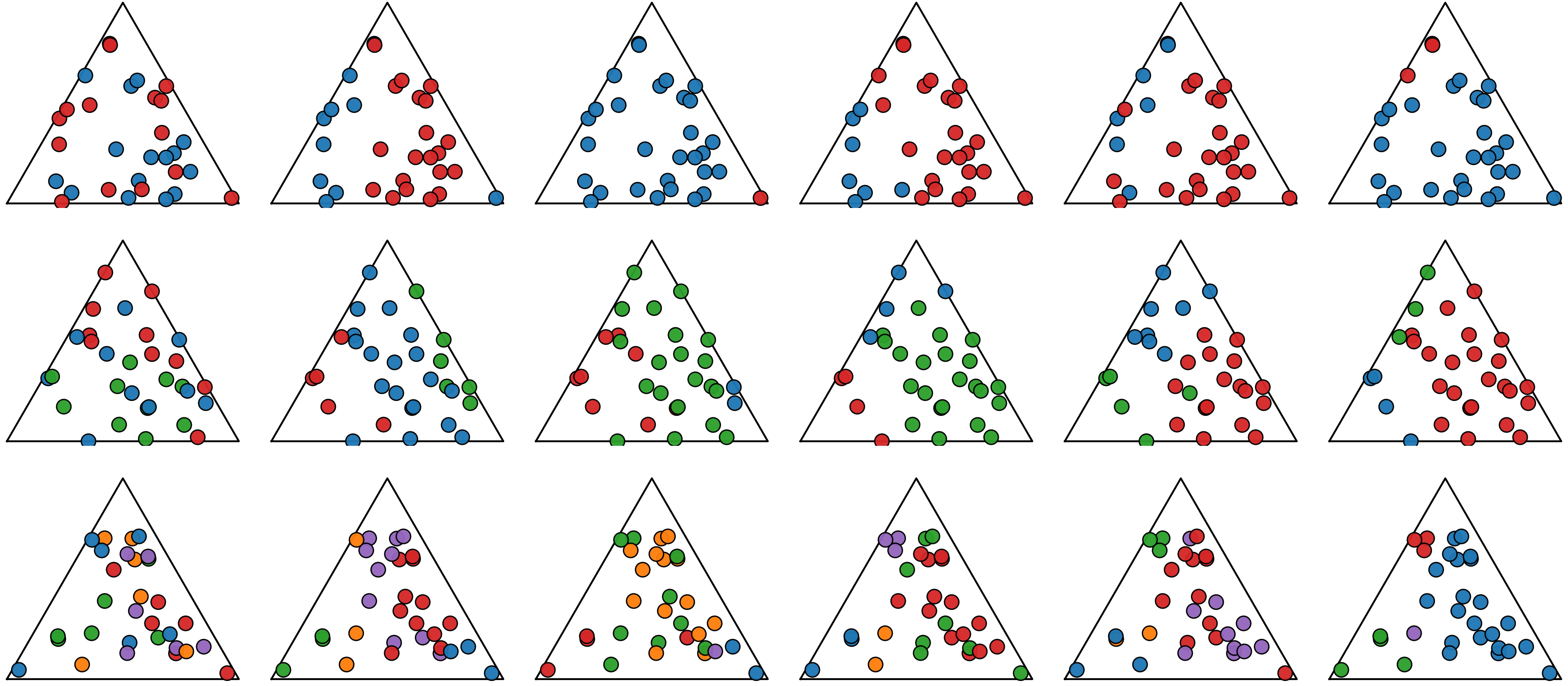}};
            \begin{scope}[x={(img.south east)},y={(img.north west)}]
                % --- 상단 method 라벨 (각 x값을 독립적으로 조절) ---
                \node at (0.080, 1.03) {Random};
                \node at (0.243, 1.03) {K-Means};
                \node at (0.413, 1.03) {MLP};
                \node at (0.583, 1.03) {K-Means(Axis)};
                \node at (0.750, 1.03) {W-K-Means(Axis)};
                \node at (0.922, 1.03) {PREC};
                % --- 좌측 세로 K 라벨 ---
                \node[rotate=90,overlay] at (-0.03, 0.87) {$K{=}2$};
                \node[rotate=90,overlay] at (-0.03, 0.51) {$K{=}3$};
                \node[rotate=90,overlay] at (-0.03, 0.16) {$K{=}5$};
            \end{scope}
        \end{tikzpicture}%
    }
    \caption{Clustering results of various methods on HalfCheetah when the preference distribution is spread randomly.}
    \label{cheetah4}
\end{figure*}

%%%%%%%%%%%%%%%%%%%%%%%%%%%%%%%%%%%%%%%%%%%%%%%%%%%%%%%%%%%%%%%%%%%%%%%%%%%%%%
\begin{figure*}[t]
    \centering
    \makebox[\textwidth]{%
        \makebox[0.04\textwidth]{}% 좌측 K열 공간
        \begin{tikzpicture}
            \node[anchor=south west,inner sep=0] (img)
                {\includegraphics[width=0.96\textwidth]{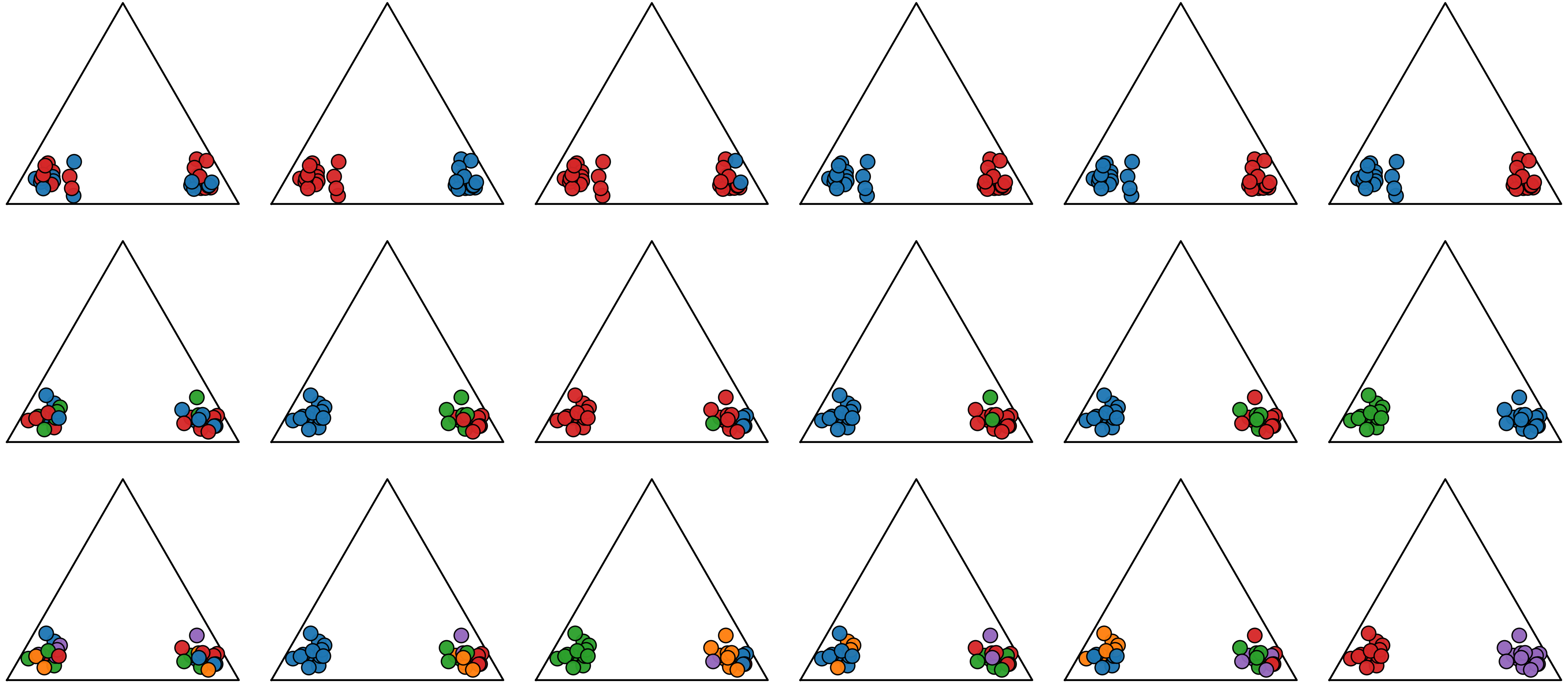}};
            \begin{scope}[x={(img.south east)},y={(img.north west)}]
                % --- 상단 method 라벨 (각 x값을 독립적으로 조절) ---
                \node at (0.080, 1.03) {Random};
                \node at (0.243, 1.03) {K-Means};
                \node at (0.413, 1.03) {MLP};
                \node at (0.583, 1.03) {K-Means(Axis)};
                \node at (0.750, 1.03) {W-K-Means(Axis)};
                \node at (0.922, 1.03) {PREC};
                % --- 좌측 세로 K 라벨 ---
                \node[rotate=90,overlay] at (-0.03, 0.87) {$K{=}2$};
                \node[rotate=90,overlay] at (-0.03, 0.51) {$K{=}3$};
                \node[rotate=90,overlay] at (-0.03, 0.16) {$K{=}5$};
            \end{scope}
        \end{tikzpicture}%
    }
    \caption{Clustering results of various methods on Hopper when the preference distribution is divided into two groups.}
    \label{hopper1}
\end{figure*}

\begin{figure*}[t]
    \centering
    \makebox[\textwidth]{%
        \makebox[0.04\textwidth]{}% 좌측 K열 공간
        \begin{tikzpicture}
            \node[anchor=south west,inner sep=0] (img)
                {\includegraphics[width=0.96\textwidth]{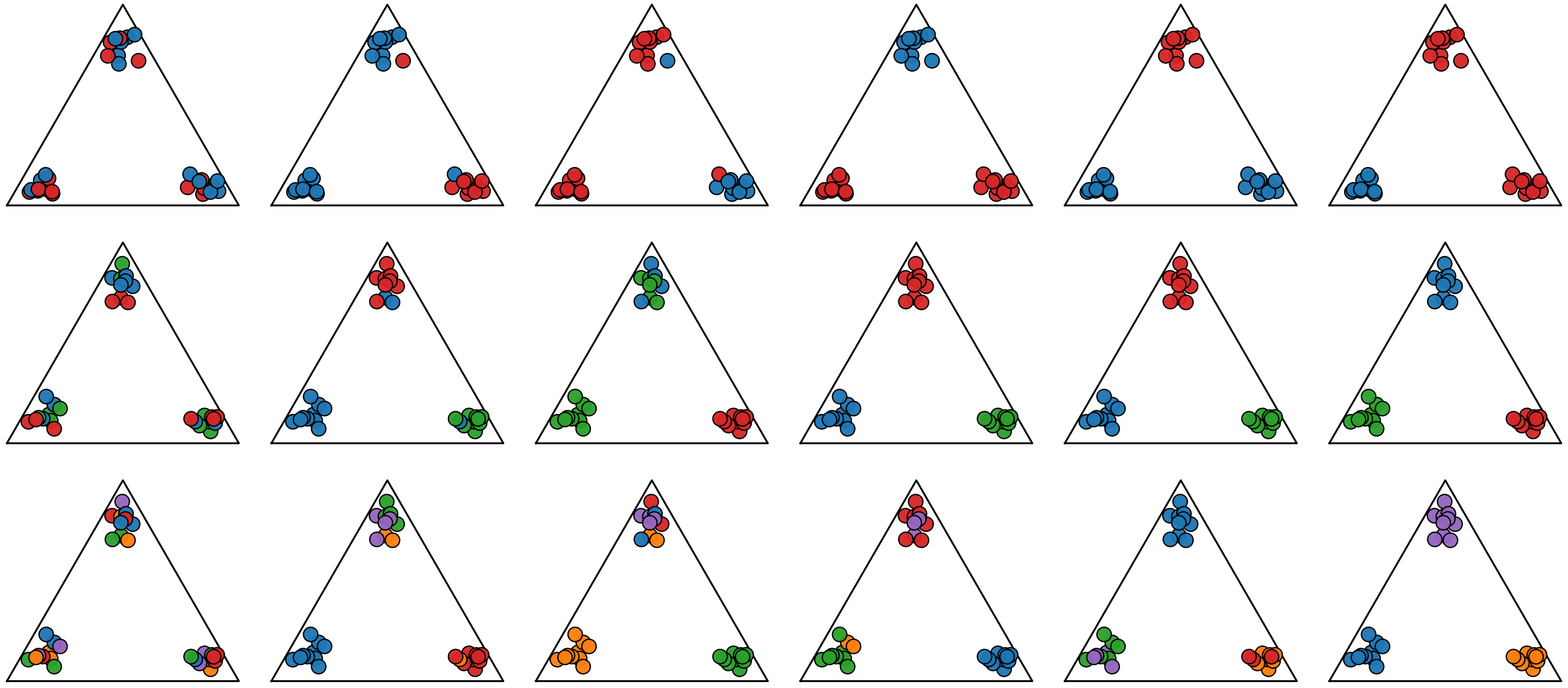}};
            \begin{scope}[x={(img.south east)},y={(img.north west)}]
                % --- 상단 method 라벨 (각 x값을 독립적으로 조절) ---
                \node at (0.080, 1.03) {Random};
                \node at (0.243, 1.03) {K-Means};
                \node at (0.413, 1.03) {MLP};
                \node at (0.583, 1.03) {K-Means(Axis)};
                \node at (0.750, 1.03) {W-K-Means(Axis)};
                \node at (0.922, 1.03) {PREC};
                % --- 좌측 세로 K 라벨 ---
                \node[rotate=90,overlay] at (-0.03, 0.87) {$K{=}2$};
                \node[rotate=90,overlay] at (-0.03, 0.51) {$K{=}3$};
                \node[rotate=90,overlay] at (-0.03, 0.16) {$K{=}5$};
            \end{scope}
        \end{tikzpicture}%
    }
    \caption{Clustering results of various methods on Hopper when the preference distribution is divided into three groups.}
    \label{hopper2}
\end{figure*}

\begin{figure*}[t]
    \centering
    \makebox[\textwidth]{%
        \makebox[0.04\textwidth]{}% 좌측 K열 공간
        \begin{tikzpicture}
            \node[anchor=south west,inner sep=0] (img)
                {\includegraphics[width=0.96\textwidth]{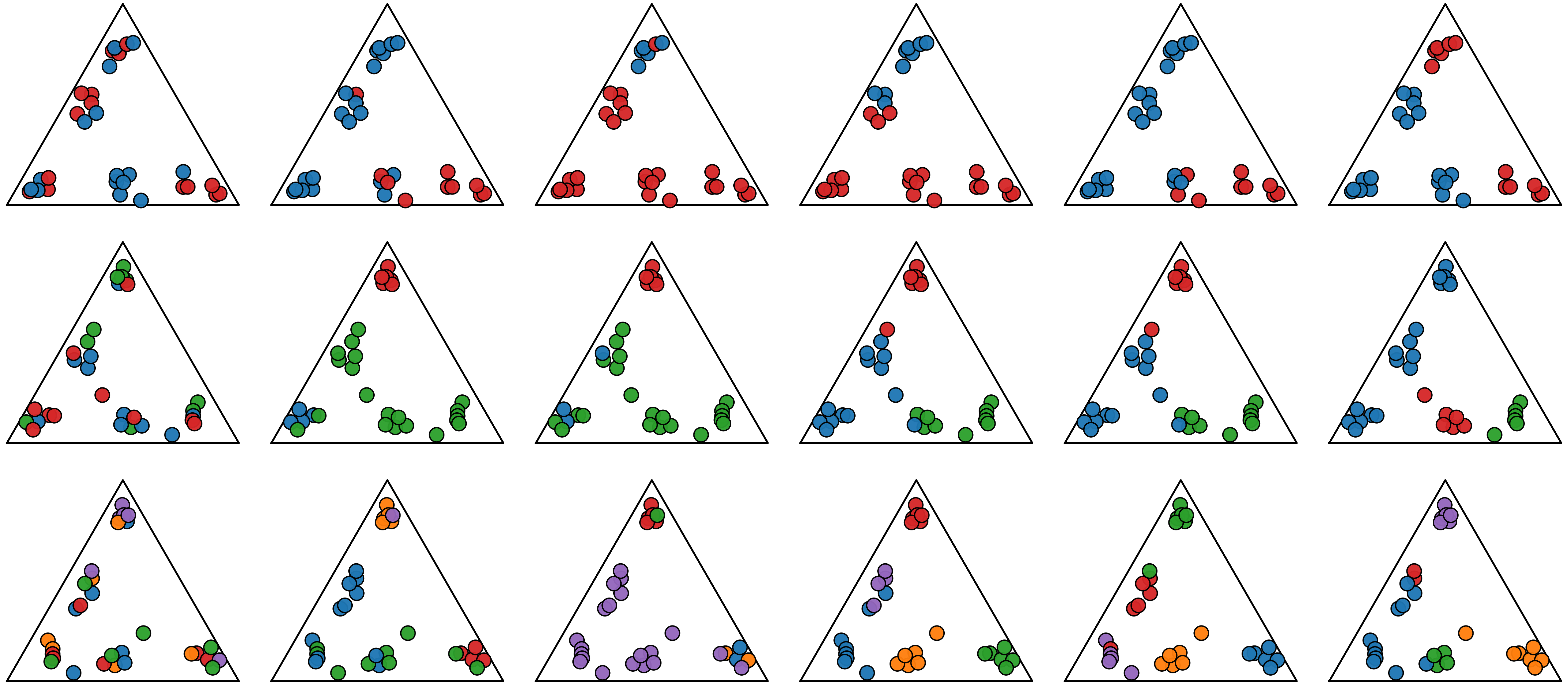}};
            \begin{scope}[x={(img.south east)},y={(img.north west)}]
                % --- 상단 method 라벨 (각 x값을 독립적으로 조절) ---
                \node at (0.080, 1.03) {Random};
                \node at (0.243, 1.03) {K-Means};
                \node at (0.413, 1.03) {MLP};
                \node at (0.583, 1.03) {K-Means(Axis)};
                \node at (0.750, 1.03) {W-K-Means(Axis)};
                \node at (0.922, 1.03) {PREC};
                % --- 좌측 세로 K 라벨 ---
                \node[rotate=90,overlay] at (-0.03, 0.87) {$K{=}2$};
                \node[rotate=90,overlay] at (-0.03, 0.51) {$K{=}3$};
                \node[rotate=90,overlay] at (-0.03, 0.16) {$K{=}5$};
            \end{scope}
        \end{tikzpicture}%
    }
    \caption{Clustering results of various methods on Hopper when the preference distribution is spread near the edge of the simplex.}
    \label{hopper3}
\end{figure*}

\begin{figure*}[t]
    \centering
    \makebox[\textwidth]{%
        \makebox[0.04\textwidth]{}% 좌측 K열 공간
        \begin{tikzpicture}
            \node[anchor=south west,inner sep=0] (img)
                {\includegraphics[width=0.96\textwidth]{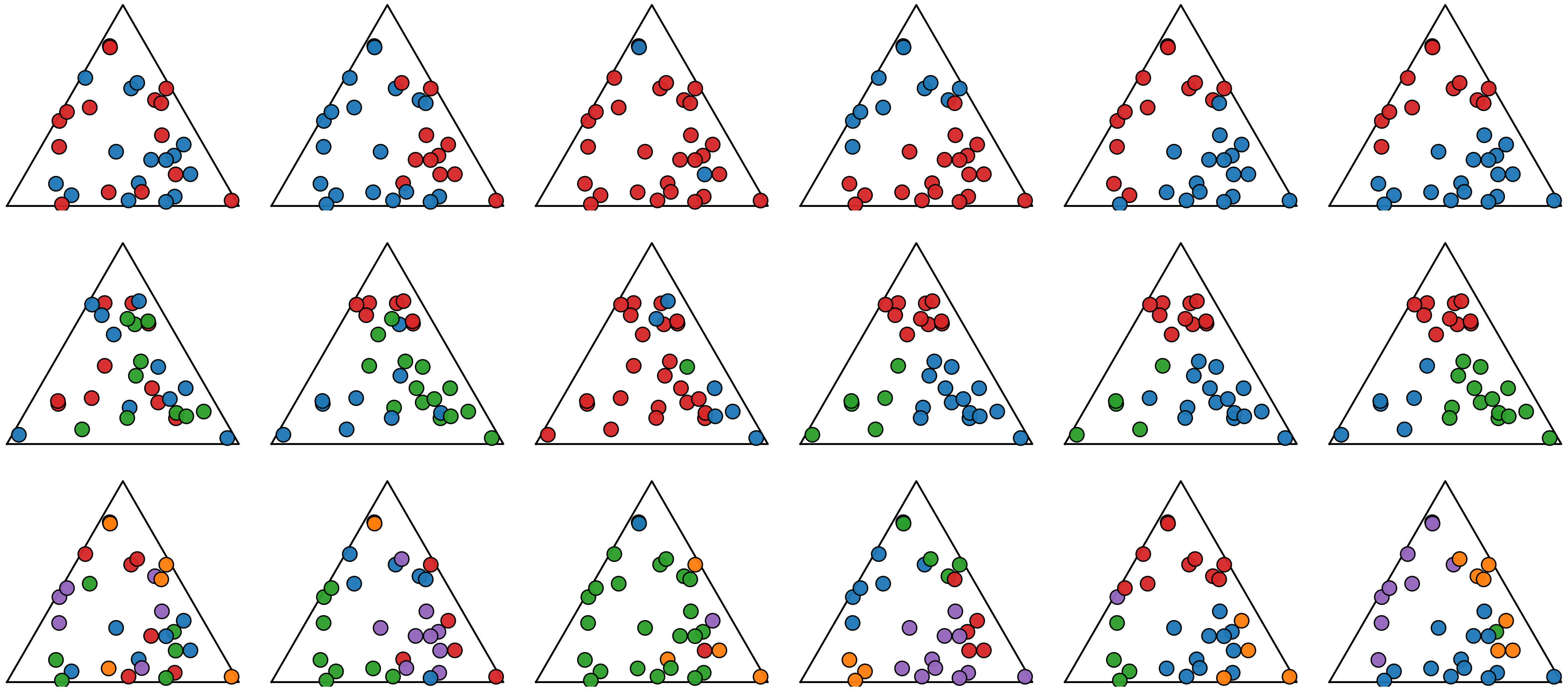}};
            \begin{scope}[x={(img.south east)},y={(img.north west)}]
                % --- 상단 method 라벨 (각 x값을 독립적으로 조절) ---
                \node at (0.080, 1.03) {Random};
                \node at (0.243, 1.03) {K-Means};
                \node at (0.413, 1.03) {MLP};
                \node at (0.583, 1.03) {K-Means(Axis)};
                \node at (0.750, 1.03) {W-K-Means(Axis)};
                \node at (0.922, 1.03) {PREC};
                % --- 좌측 세로 K 라벨 ---
                \node[rotate=90,overlay] at (-0.03, 0.87) {$K{=}2$};
                \node[rotate=90,overlay] at (-0.03, 0.51) {$K{=}3$};
                \node[rotate=90,overlay] at (-0.03, 0.16) {$K{=}5$};
            \end{scope}
        \end{tikzpicture}%
    }
    \caption{Clustering results of various methods on Hopper when the preference distribution is spread randomly.}
    \label{hopper4}
\end{figure*}

%%%%%%%%%%%%%%%%%%%%%%%%%%%%%%%%%%%%%%%%%%%%%%%%%%%%%%%%%%%%%%%%%%%%%%%%%%%%%%
\begin{figure*}[t]
    \centering
    \makebox[\textwidth]{%
        \makebox[0.04\textwidth]{}% 좌측 K열 공간
        \begin{tikzpicture}
            \node[anchor=south west,inner sep=0] (img)
                {\includegraphics[width=0.96\textwidth]{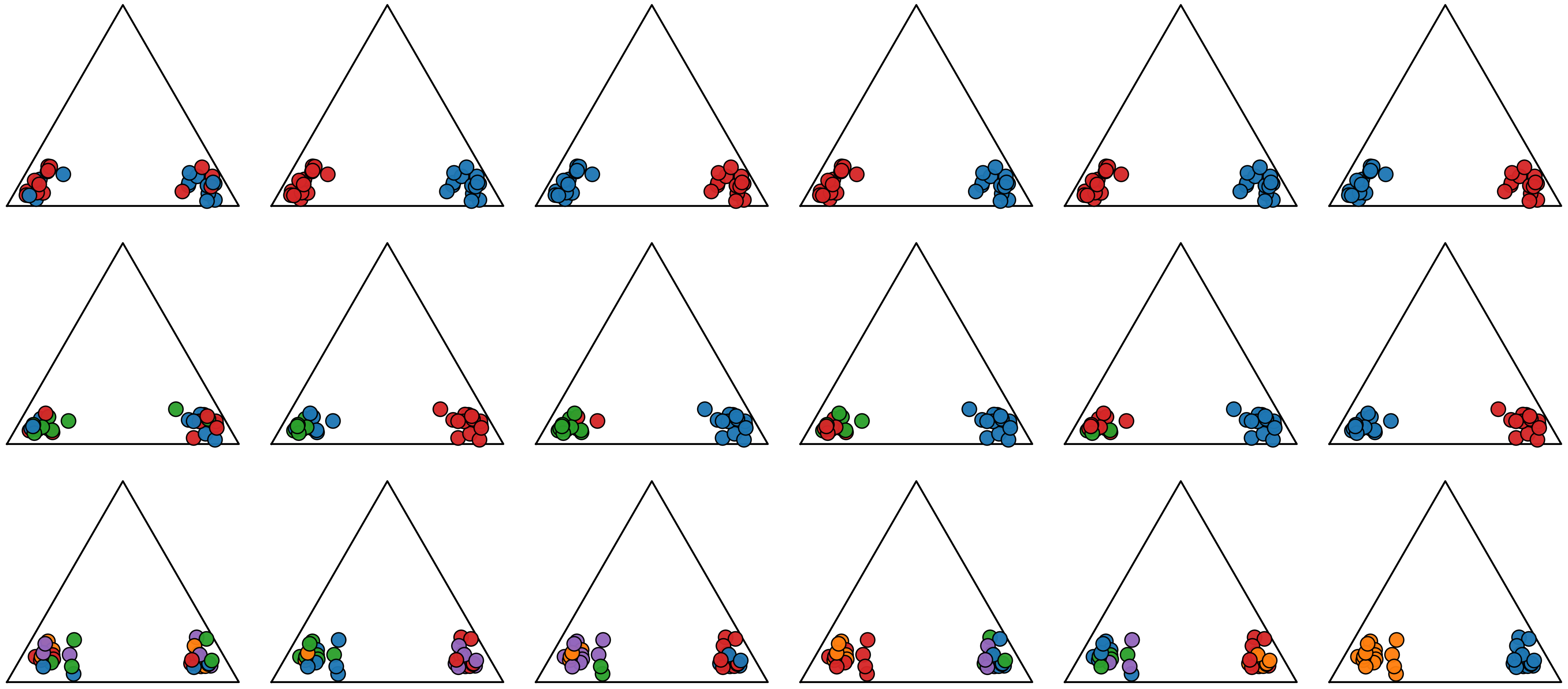}};
            \begin{scope}[x={(img.south east)},y={(img.north west)}]
                % --- 상단 method 라벨 (각 x값을 독립적으로 조절) ---
                \node at (0.080, 1.03) {Random};
                \node at (0.243, 1.03) {K-Means};
                \node at (0.413, 1.03) {MLP};
                \node at (0.583, 1.03) {K-Means(Axis)};
                \node at (0.750, 1.03) {W-K-Means(Axis)};
                \node at (0.922, 1.03) {PREC};
                % --- 좌측 세로 K 라벨 ---
                \node[rotate=90,overlay] at (-0.03, 0.87) {$K{=}2$};
                \node[rotate=90,overlay] at (-0.03, 0.51) {$K{=}3$};
                \node[rotate=90,overlay] at (-0.03, 0.16) {$K{=}5$};
            \end{scope}
        \end{tikzpicture}%
    }
    \caption{Clustering results of various methods on Walker2d when the preference distribution is divided into two groups.}
    \label{walker1}
\end{figure*}

\begin{figure*}[t]
    \centering
    \makebox[\textwidth]{%
        \makebox[0.04\textwidth]{}% 좌측 K열 공간
        \begin{tikzpicture}
            \node[anchor=south west,inner sep=0] (img)
                {\includegraphics[width=0.96\textwidth]{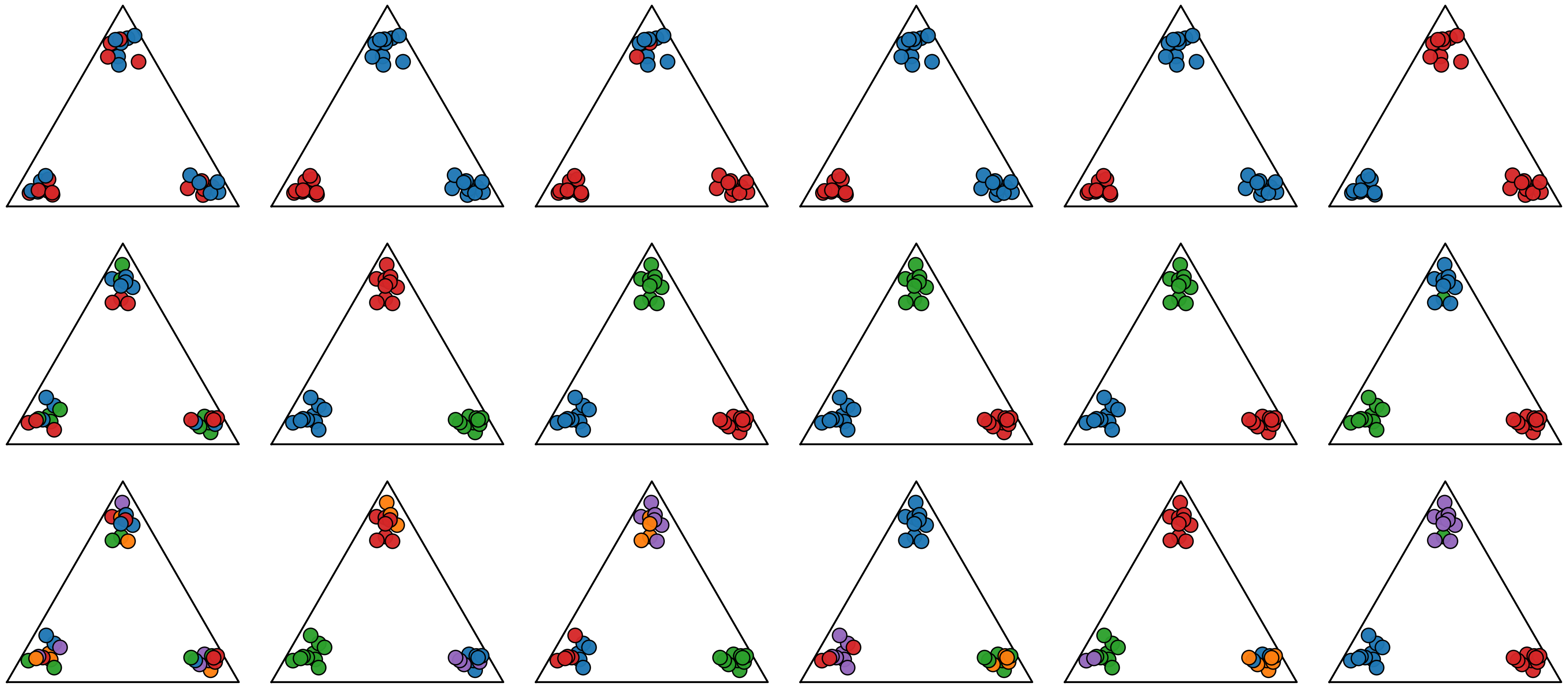}};
            \begin{scope}[x={(img.south east)},y={(img.north west)}]
                % --- 상단 method 라벨 (각 x값을 독립적으로 조절) ---
                \node at (0.080, 1.03) {Random};
                \node at (0.243, 1.03) {K-Means};
                \node at (0.413, 1.03) {MLP};
                \node at (0.583, 1.03) {K-Means(Axis)};
                \node at (0.750, 1.03) {W-K-Means(Axis)};
                \node at (0.922, 1.03) {PREC};
                % --- 좌측 세로 K 라벨 ---
                \node[rotate=90,overlay] at (-0.03, 0.87) {$K{=}2$};
                \node[rotate=90,overlay] at (-0.03, 0.51) {$K{=}3$};
                \node[rotate=90,overlay] at (-0.03, 0.16) {$K{=}5$};
            \end{scope}
        \end{tikzpicture}%
    }
    \caption{Clustering results of various methods on Walker2d when the preference distribution is divided into three groups.}
    \label{walker2}
\end{figure*}

\begin{figure*}[t]
    \centering
    \makebox[\textwidth]{%
        \makebox[0.04\textwidth]{}% 좌측 K열 공간
        \begin{tikzpicture}
            \node[anchor=south west,inner sep=0] (img)
                {\includegraphics[width=0.96\textwidth]{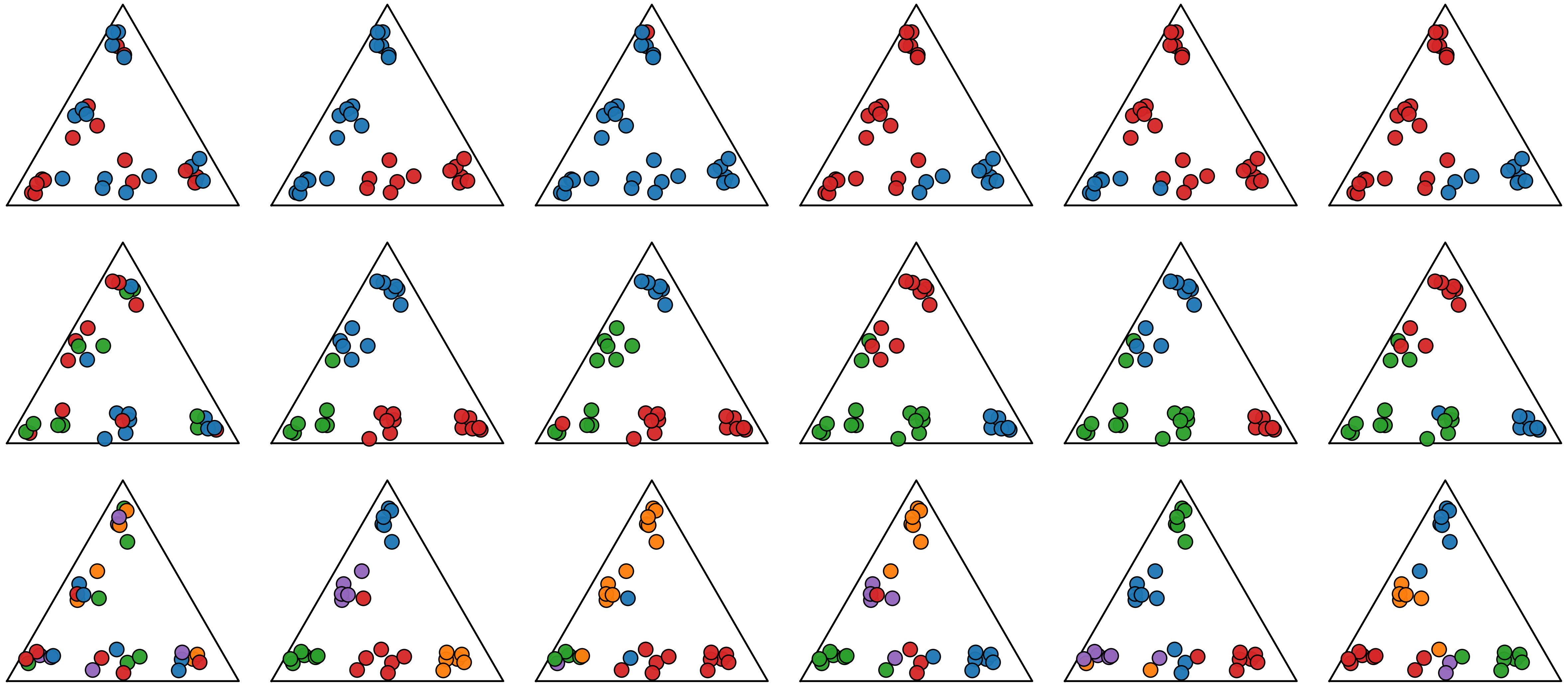}};
            \begin{scope}[x={(img.south east)},y={(img.north west)}]
                % --- 상단 method 라벨 (각 x값을 독립적으로 조절) ---
                \node at (0.080, 1.03) {Random};
                \node at (0.243, 1.03) {K-Means};
                \node at (0.413, 1.03) {MLP};
                \node at (0.583, 1.03) {K-Means(Axis)};
                \node at (0.750, 1.03) {W-K-Means(Axis)};
                \node at (0.922, 1.03) {PREC};
                % --- 좌측 세로 K 라벨 ---
                \node[rotate=90,overlay] at (-0.03, 0.87) {$K{=}2$};
                \node[rotate=90,overlay] at (-0.03, 0.51) {$K{=}3$};
                \node[rotate=90,overlay] at (-0.03, 0.16) {$K{=}5$};
            \end{scope}
        \end{tikzpicture}%
    }
    \caption{Clustering results of various methods on Walker2d when the preference distribution is spread near the edge of the simplex.}
    \label{walker3}
\end{figure*}

\begin{figure*}[t]
    \centering
    \makebox[\textwidth]{%
        \makebox[0.04\textwidth]{}% 좌측 K열 공간
        \begin{tikzpicture}
            \node[anchor=south west,inner sep=0] (img)
                {\includegraphics[width=0.96\textwidth]{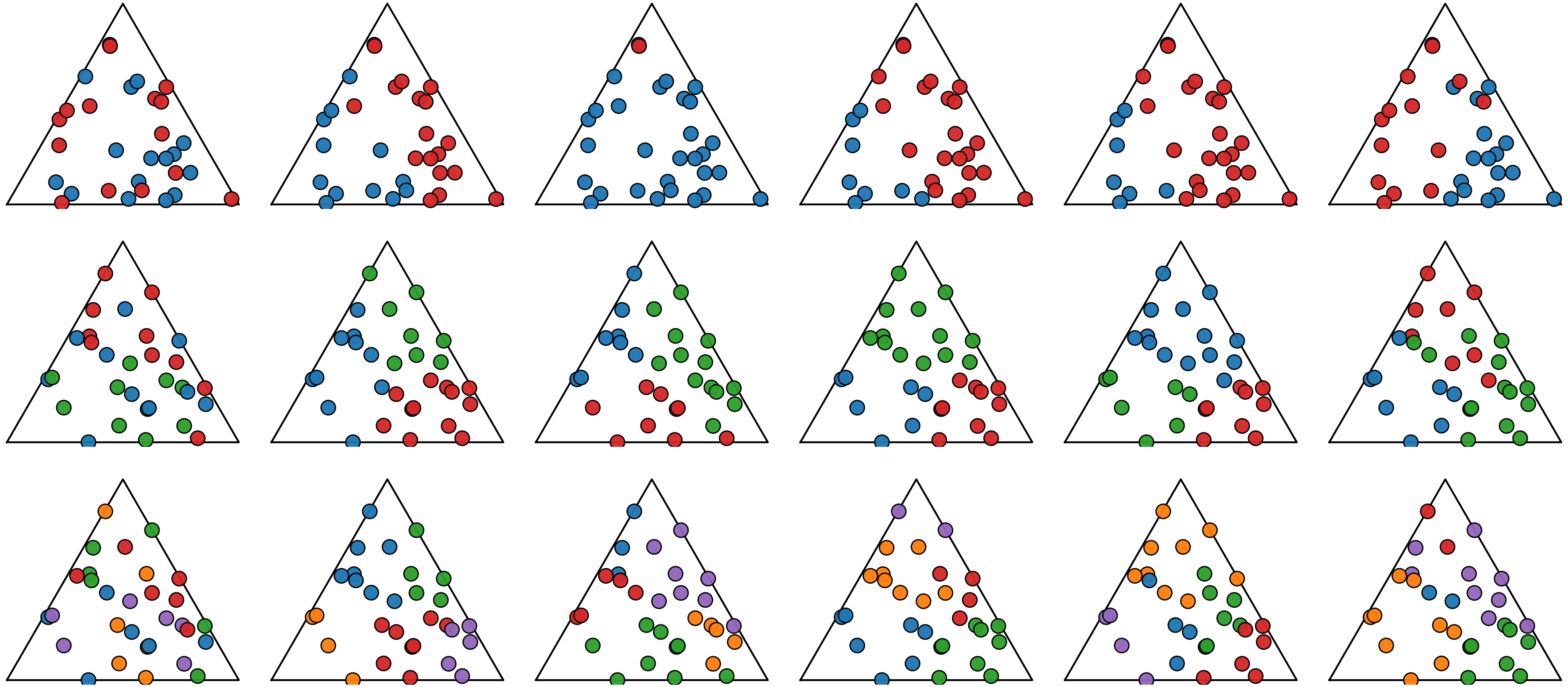}};
            \begin{scope}[x={(img.south east)},y={(img.north west)}]
                % --- 상단 method 라벨 (각 x값을 독립적으로 조절) ---
                \node at (0.080, 1.03) {Random};
                \node at (0.243, 1.03) {K-Means};
                \node at (0.413, 1.03) {MLP};
                \node at (0.583, 1.03) {K-Means(Axis)};
                \node at (0.750, 1.03) {W-K-Means(Axis)};
                \node at (0.922, 1.03) {PREC};
                % --- 좌측 세로 K 라벨 ---
                \node[rotate=90,overlay] at (-0.03, 0.87) {$K{=}2$};
                \node[rotate=90,overlay] at (-0.03, 0.51) {$K{=}3$};
                \node[rotate=90,overlay] at (-0.03, 0.16) {$K{=}5$};
            \end{scope}
        \end{tikzpicture}%
    }
    \caption{Clustering results of various methods on Walker2d when the preference distribution is spread randomly.}
    \label{walker4}
\end{figure*}

%%%%%%%%%%%%%%%%%%%%%%%%%%%%%%%%%%%%%%%%%%%%%%%%%%%%%%%%%%%%%%%%%%%%%%%%%%%

\end{document}